\documentclass[11pt]{article}

\usepackage[a4paper,margin=1in]{geometry}
\usepackage[utf8]{inputenc}
\usepackage[T1]{fontenc}
\usepackage{lmodern}
\usepackage{microtype}
\usepackage{amsmath,amssymb,amsfonts,amsthm,mathtools}
\usepackage{graphicx}
\setkeys{Gin}{draft=false}
\usepackage{booktabs}
\usepackage{caption}
\usepackage{subcaption}
\usepackage{float}
\usepackage{placeins}
\usepackage{algorithmicx}
\usepackage{algpseudocode}
\usepackage[most]{tcolorbox}
\usepackage{natbib}
\usepackage{url}
\usepackage{xcolor}
\usepackage{enumitem}
\usepackage[hidelinks]{hyperref}

\newtheorem{assumption}{Assumption}
\newtheorem{lemma}{Lemma}
\newtheorem{proposition}{Proposition}
\newtheorem{theorem}{Theorem}
\newtheorem{corollary}{Corollary}

\newcommand{\R}{\mathbb{R}}
\newcommand{\ip}[2]{\left\langle #1,#2\right\rangle}
\newcommand{\pos}[1]{\left[#1\right]_+}
\newcommand{\normg}[1]{\left\|#1\right\|}

\title{Learning from the Descent Direction: Adaptive Gradient Descent under One-Sided H\"older Regularity}

\author{%
Arzu Ahmadova\\
\small Weierstrass Institute for Applied Analysis and Stochastics\\
\small Berlin, Germany
\and
Ismail Huseynov\\
\small Physikalisch-Technische Bundesanstalt (PTB), Germany\\
\small Technical University of Berlin, Germany
}

\date{}

\begin{document}

\maketitle

\begin{abstract}
We study adaptive gradient descent for continuously differentiable, possibly
nonconvex objectives under one-sided H\"older regularity. Classical
H\"older-gradient and Lipschitz-gradient assumptions control the full norm of
the gradient variation, which can lead to conservative step sizes when much of
that variation is orthogonal to, or favorable along, the descent direction.
Motivated by this observation, we formulate an adaptive gradient method based on a one-sided H\"older curvature estimate, which controls only the directional term that appears in the descent inequality. This directional estimate is weaker than full H\"older gradient continuity and is more directly aligned with the realized update direction. The proposed method chooses a scalar step size from the estimated positive directional curvature and combines it with a simple sufficient-decrease safeguard. For continuously differentiable nonconvex objectives on a convex region containing the accepted update segments, we prove an explicit
best-iterate stationarity guarantee with a rate determined by the H\"older
exponent. Thus, unlike pre-specified diminishing step-size schemes, the method
adapts to the local descent geometry rather than relying only on the iteration
counter. We evaluate the method on two controlled full-batch benchmarks designed to separate positive directional curvature from full gradient variation. In a binary classification benchmark, the proposed one-sided H\"older method achieves
the lowest final cross-entropy, objective value, and gradient norm, together with the largest classification margin among the compared scalar gradient
methods. In a nonconvex H\"older regression benchmark, it obtains the lowest final objective gap and final gradient norm. These results support one-sided H\"older curvature as an effective adaptive step-size signal when full-gradient variation is inflated by directions that do not obstruct descent.
\end{abstract}

% ============================================================
\section{Introduction}
\label{sec:introduction}
% ============================================================

Gradient descent and its variants remain basic tools for large-scale
optimization in machine learning. Their standard nonconvex analysis typically
starts from global Lipschitz continuity of the gradient: there exists
\(L>0\) such that
\begin{equation}
    \label{eq:intro-lsmooth}
    \normg{\nabla F(x)-\nabla F(y)}
    \leq
    L\normg{x-y},
    \qquad
    x,y\in\R^d.
\end{equation}
For differentiable objectives bounded from below, this assumption yields the
classical best-iterate stationarity guarantee
\begin{equation}
     \min_{0\leq k<K}\normg{\nabla F(x_k)}
    =
    O(K^{-1/2})
\end{equation}
for gradient descent with a step size of order \(1/L\)
\citep{ghadimi2013stochastic,carmon2020lower}. However,
\eqref{eq:intro-lsmooth} can be conservative because it controls the full
gradient variation vector. For a descent step, the relevant quantity is not
the entire gradient difference, but the component of this difference that enters
the descent inequality along the realized update direction.

A common relaxation is H\"older continuity of the gradient. Given
\(\alpha\in(0,1]\), one assumes that there exists \(L_\alpha>0\) such that
\begin{equation}
    \label{eq:intro-holder-full}
    \normg{\nabla F(x)-\nabla F(y)}
    \leq
    L_\alpha \normg{x-y}^{\alpha},
    \qquad
    x,y\in\R^d.
\end{equation}
The case \(\alpha=1\) recovers \eqref{eq:intro-lsmooth}. H\"older-gradient
models appear in universal gradient methods and in worst-case complexity
analyses for optimization with non-Lipschitz gradients
\citep{nesterov2015universal,cartis2017holder,yashtini2016global}.
These models are useful because they cover objectives whose gradients are
continuous but not Lipschitz. Nevertheless, they still impose a full-norm
control on the gradient variation, and therefore may overestimate the curvature
that is actually relevant for descent.

Recent work by \citet{ahmadova2025convergence} studies convergence of gradient
descent systems with pre-specified diminishing step-size conditions under
related one-sided regularity assumptions. Such schemes establish convergence to
stationarity once a decay schedule is fixed, but their performance depends on
the chosen decay exponent. This motivates an adaptive alternative: rather
than prescribing a decay schedule as a function of the iteration counter, the
step size should be chosen from the local one-sided curvature observed along
the descent direction.

This paper develops such an adaptive strategy under a one-sided
H\"older curvature condition. We say that \(F\) satisfies a one-sided
H\"older curvature condition with exponent \(\alpha\in(0,1]\) and constant
\(c_\alpha>0\) on a convex set \(Q\) if
\begin{equation}
    \label{eq:intro-one-sided-holder}
    \ip{\nabla F(x)-\nabla F(y)}{x-y}
    \leq
    c_\alpha \normg{x-y}^{1+\alpha},
    \qquad
    x,y\in Q.
\end{equation}
Here \(c_\alpha\) is an upper directional-curvature constant. By the
Cauchy--Schwarz inequality, \eqref{eq:intro-holder-full} implies
\eqref{eq:intro-one-sided-holder} with \(c_\alpha\leq L_\alpha\). The converse
is false in general. The one-sided condition can hold with a much smaller
constant when the full gradient variation is large but mostly orthogonal to, or
favorable along, the displacement direction.

For \(\alpha=1\), \eqref{eq:intro-one-sided-holder} becomes the one-sided
Lipschitz curvature condition
\[
    \ip{\nabla F(x)-\nabla F(y)}{x-y}
    \leq
    c\normg{x-y}^{2}.
\]
For twice continuously differentiable objectives, this controls the largest
eigenvalue of \(\nabla^2F(x)\), whereas the classical Lipschitz-gradient
condition controls the spectral norm. Hence large negative curvature need not
force a small step. Such one-sided Lipschitz or logarithmic-norm type
conditions are classical in stability analysis and adaptive time-stepping, and
they also appear in modern contraction-based learning models
\citep{soderlind2006adaptive,jafarpour2021robust}.

The algorithmic role of \eqref{eq:intro-one-sided-holder} is visible from the
following descent calculation. Let \(g=\nabla F(x)\) and consider
\(x^+=x-\eta g\). Then
\begin{equation}
    \label{eq:intro-descent}
    F(x-\eta g)
    \leq
    F(x)
    -
    \eta\normg{g}^{2}
    +
    \frac{c_\alpha}{1+\alpha}
    \eta^{1+\alpha}\normg{g}^{1+\alpha}.
\end{equation}
Balancing the descent term and the one-sided H\"older remainder gives the
adaptive scale
\begin{equation}
    \label{eq:intro-adaptive-scale}
    \eta
    \asymp
    c_\alpha^{-1/\alpha}
    \normg{\nabla F(x)}^{(1-\alpha)/\alpha}.
\end{equation}
When \(\alpha=1\), this reduces to the usual constant-step smooth regime. When
\(0<\alpha<1\), the step size decreases automatically near stationarity.
The resulting rule is adaptive because the shrinkage is governed by the
current gradient norm and the estimated directional curvature, rather than by a
fixed iteration-dependent schedule.

\paragraph{Contributions.}
The main contributions of this paper are the following.

\begin{enumerate}[leftmargin=1.5em]
    \item \textbf{One-sided H\"older descent model.}
    We formulate gradient descent under the one-sided H\"older curvature
    condition \eqref{eq:intro-one-sided-holder}. This replaces full-gradient
    variation by the positive directional curvature term that appears directly
    in the descent inequality.

    \item \textbf{Adaptive scalar step-size rule.}
    We derive an adaptive one-sided H\"older step scale from
    \eqref{eq:intro-descent}. The rule uses local directional curvature and the
    current gradient norm, so it does not require a pre-specified diminishing
    schedule.

    \item \textbf{Stationarity theory for the safeguarded method.}
    For continuously differentiable nonconvex objectives on a convex
    region containing the accepted update segments, we prove an explicit
    best-iterate stationarity guarantee with the H\"older rate
    \(O(K^{-\alpha/(1+\alpha)})\). We also state the corresponding
    sufficient-decrease and backtracking consequences needed for an
    implementable method.

    \item \textbf{Comparison with scalar curvature baselines.}
    We compare the proposed one-sided H\"older rule with fixed-step GD,
    diminishing-step GD, global Lipschitz, one-sided Lipschitz, and global
    H\"older scalar step rules. These methods are used as baselines to isolate
    the effect of one-sided H\"older curvature, not as additional contributions.

    \item \textbf{Controlled numerical benchmarks.}
    We evaluate the method on two controlled full-batch benchmarks: a
    binary classification benchmark and a nonconvex H\"older regression
    benchmark. In both settings, the design separates positive directional
    curvature from full gradient variation, and OSH achieves the best selected
    stationarity- and loss-based metrics among the compared scalar solvers.
\end{enumerate}

\paragraph{Organization.}
Section~\ref{sec:related-work} reviews related work on line search,
H\"older smoothness, one-sided regularity, and convergence beyond stationarity.
Section~\ref{sec:regularity-solvers} introduces the regularity models,
curvature estimates, and scalar step proposals. Section~\ref{sec:main-theory}
develops the one-sided H\"older descent analysis, the adaptive stationarity
rate, and the safeguarded convergence guarantees. Section~\ref{sec:experiments}
presents the controlled classification and regression benchmarks. Proofs and
reproducibility details are provided in the appendix.

\section{Related work}
\label{sec:related-work}

\paragraph{Line search and adaptive steps.}
Classical line-search methods choose step sizes through sufficient-decrease
conditions rather than through a fixed global smoothness constant. The Armijo
rule \citep{armijo1966} is the standard example. Our method uses this
sufficient-decrease principle only as a safeguard: the initial step proposal is
derived from a one-sided H\"older curvature estimate along the descent
direction. This distinguishes the method from Barzilai--Borwein steps
\citep{barzilai1988}, which use spectral information from successive iterates,
and from local overstepping control \citep{malitsky2020adaptive}, which adapts
steps using gradient differences. It is also distinct from coordinatewise
adaptive methods such as AdaGrad, Adam, AMSGrad, and AdamW
\citep{duchi2011adagrad,kingma2015adam,reddi2018adam,loshchilov2019adamw},
which adapt steps through accumulated gradient or moment information rather
than through a scalar directional-curvature model.

\paragraph{One-sided H\"older regularity.}
\citet{ahmadova2025convergence} studied gradient descent under a one-sided
H\"older regularity condition and proved convergence to stationarity using
pre-specified diminishing step sizes. In contrast, we derive an adaptive
step-size rule from the one-sided H\"older descent inequality itself. The step
therefore depends on the current gradient norm and the estimated directional
curvature, instead of on a prescribed decay exponent. This leads to an explicit best-iterate stationarity rate for the safeguarded adaptive method. A more
detailed comparison between diminishing and adaptive steps is given in
Appendix~\ref{sec:adaptive-vs-diminishing}.

\paragraph{H\"older and generalized smoothness.}
Universal gradient methods adapt to unknown H\"older smoothness in convex
optimization \citep{nesterov2015universal}, while evaluation-complexity results
under H\"older-continuous gradients are known in nonconvex optimization
\citep{cartis2017holder}. Descent analyses beyond Lipschitz gradient continuity have also been developed in broader first-order settings
\citep{bauschke2017descent}. These works rely on full-norm gradient-variation
control. Recent generalized-smoothness and clipping analyses likewise go beyond
uniform Lipschitz-gradient assumptions
\citep{zhang2020clipping,faw2023beyond,gorbunov2025methods}. Our focus is
different: we replace full-gradient variation by the positive directional
curvature term that is sufficient for descent along the realized update.

\paragraph{Beyond stationarity.}
Stationarity rates do not by themselves imply convergence of iterates or
objective-gap rates. Stronger conclusions require additional geometry, such as the Polyak-{\L}ojasiewicz inequality \citep{polyak1963gradient,karimi2016pl},
quadratic growth, error bounds, or Kurdyka-{\L}ojasiewicz structure
\citep{attouch2013convergence,bolte2014palm}. Accordingly, our main
nonconvex guarantee is stated as a best-iterate stationarity result, with
additional consequences recorded only under stronger geometric assumptions.

%============================================================
\section{Background}
\label{sec:regularity-solvers}
%============================================================

Let \(F:\R^d\to\R\) be continuously differentiable. We consider gradient
descent
\[
    x_{k+1}=x_k-\eta_k g_k,\qquad g_k:=\nabla F(x_k),
\]
on a closed convex set \(Q\subset\R^d\) containing the iterates and the accepted
line-search segments.

\subsection{Regularity models}
\label{subsec:four-regularity-models}

The classical full-gradient models and their one-sided counterparts are
summarized in Table~\ref{tab:four-conditions}. The proposed assumption is the
one-sided H\"older curvature condition, which controls the positive directional
curvature entering the descent inequality rather than the full norm of the
gradient variation.

\begin{table}[H]
\centering
\caption{Regularity models and natural step scales. The one-sided models use
only the directional curvature term needed for descent.}
\label{tab:four-conditions}
\begingroup
\small
\setlength{\tabcolsep}{3.5pt}
\renewcommand{\arraystretch}{1.08}
\begin{tabular}{@{}llll@{}}
\toprule
Model & Condition & Step scale & Rate \\
\midrule
Global Lipschitz &
\(\normg{\nabla F(x)-\nabla F(y)}\le L\normg{x-y}\) &
\(\eta_k\asymp L^{-1}\) &
\(K^{-1/2}\) \\
Global H\"older &
\(\normg{\nabla F(x)-\nabla F(y)}\le L_\alpha\normg{x-y}^{\alpha}\) &
\(\eta_k\asymp L_\alpha^{-1/\alpha}\normg{g_k}^{(1-\alpha)/\alpha}\) &
\(K^{-\alpha/(1+\alpha)}\) \\
One-sided Lipschitz &
\(\ip{\nabla F(x)-\nabla F(y)}{x-y}\le c\normg{x-y}^{2}\) &
\(\eta_k\asymp c^{-1}\) &
\(K^{-1/2}\) \\
One-sided H\"older &
\(\ip{\nabla F(x)-\nabla F(y)}{x-y}\le c_\alpha\normg{x-y}^{1+\alpha}\) &
\(\eta_k\asymp c_\alpha^{-1/\alpha}\normg{g_k}^{(1-\alpha)/\alpha}\) &
\(K^{-\alpha/(1+\alpha)}\) \\
\bottomrule
\end{tabular}
\endgroup
\end{table}

\begin{assumption}[One-sided H\"older curvature]
\label{ass:one-sided-holder}
There exist \(c_\alpha>0\) and \(\alpha\in(0,1]\) such that
\[
    \ip{\nabla F(x)-\nabla F(y)}{x-y}
    \leq c_\alpha\normg{x-y}^{1+\alpha},
    \qquad x,y\in Q.
\]
\end{assumption}

By Cauchy--Schwarz, global H\"older gradient continuity,
\(\normg{\nabla F(x)-\nabla F(y)}\le L_\alpha\normg{x-y}^{\alpha}\), implies
Assumption~\ref{ass:one-sided-holder} with \(c_\alpha\le L_\alpha\). The
converse generally fails because the one-sided condition does not control
gradient variation orthogonal to the displacement direction.

A simple quadratic example illustrates the possible gap between full and
one-sided curvature. If \(F(x)=\frac12x^\top A x\) with \(A=A^\top\), then the
full Lipschitz-gradient constant is \(\normg{A}_2\), whereas the one-sided
Lipschitz curvature constant is \(\lambda_{\max}(A)\). Thus for
\(A=\operatorname{diag}(1,-M)\), \(M\gg1\), one has \(L=M\) but \(c=1\). Full
smoothness is dominated by the large negative-curvature direction, while the
one-sided constant measures only the largest positive directional curvature.

\subsection{Curvature estimates and step proposals}
\label{subsec:curvature-estimates}

At iteration \(k\), choose a probe radius \(r_k>0\) and set
\[
    d_k=-r_k\frac{g_k}{\normg{g_k}+\varepsilon}.
\]
With \(\pos{a}:=\max\{a,0\}\), define the local curvature estimates
\[
\begin{aligned}
\widehat L_{1,k}
&:=\frac{\normg{\nabla F(x_k+d_k)-\nabla F(x_k)}}{\normg{d_k}},
&
\widehat c_{1,k}
&:=\frac{\pos{\ip{\nabla F(x_k+d_k)-\nabla F(x_k)}{d_k}}}{\normg{d_k}^{2}},
\\
\widehat L_{\alpha,k}
&:=\frac{\normg{\nabla F(x_k+d_k)-\nabla F(x_k)}}{\normg{d_k}^{\alpha}},
&
\widehat c_{\alpha,k}
&:=\frac{\pos{\ip{\nabla F(x_k+d_k)-\nabla F(x_k)}{d_k}}}{\normg{d_k}^{1+\alpha}}.
\end{aligned}
\]
The positive part is essential: negative directional curvature helps descent
and should not reduce the step size. For stability, each raw estimate
\(\widehat q_k\) is replaced by
\[
    \widetilde q_k=\max\{q_{\min},\lambda\widetilde q_{k-1},\widehat q_k\},
    \qquad \lambda\in(0,1).
\]

The four scalar step proposals used in the experiments are
\[
\begin{aligned}
\eta_k^{\rm GL}
&=\min\left\{\eta_{\max},\frac{s_L}{\widetilde L_{1,k}}\right\},\qquad
\eta_k^{\rm OSL}=\min\left\{\eta_{\max},\frac{s_c}{\widetilde c_{1,k}}\right\},
\\
\eta_k^{\rm GH}
&=\min\left\{\eta_{\max},
s_H\left(\frac{1+\alpha}{\widetilde L_{\alpha,k}}\right)^{1/\alpha}
\normg{g_k}^{(1-\alpha)/\alpha}\right\},\\
\eta_k^{\rm OSH}
&=\min\left\{\eta_{\max},
s_{OH}\left(\frac{1+\alpha}{\widetilde c_{\alpha,k}}\right)^{1/\alpha}
\normg{g_k}^{(1-\alpha)/\alpha}\right\}.
\end{aligned}
\]
The proposed method is the OSH rule. The other three proposals are included as
scalar regularity-based baselines. Each proposal is accepted directly if it
satisfies
\[
    F(x_k-\eta_k g_k)\le F(x_k)-\beta\eta_k\normg{g_k}^{2},
    \qquad \beta\in(0,1),
\]
and otherwise is reduced geometrically until this sufficient-decrease condition
holds.

\begin{tcolorbox}[
    title={Compact OSH pseudocode},
    colback=white,
    colframe=black,
    arc=1mm,
    boxrule=0.7pt,
    left=1mm,
    right=1mm,
    top=1mm,
    bottom=1mm,
    before skip=4pt,
    after skip=4pt
]
\footnotesize
\begin{algorithmic}[1]
\State \textbf{Input:} \(x_0,\alpha,r_k,\rho,\beta,q_{\min},\eta_{\max},s_{OH},\varepsilon\)
\State Initialize \(\widetilde c_{\alpha,-1}=q_{\min}\)
\For{\(k=0,1,2,\ldots\)}
    \State \(g_k=\nabla F(x_k)\)
    \If{\(\normg{g_k}\le \varepsilon\)} \State \textbf{return} \(x_k\) \EndIf
    \State \(d_k=-r_k g_k/(\normg{g_k}+\varepsilon)\)
    \State \(\widehat c_{\alpha,k}=
    \pos{\ip{\nabla F(x_k+d_k)-\nabla F(x_k)}{d_k}}/\normg{d_k}^{1+\alpha}\)
    \State \(\widetilde c_{\alpha,k}=
    \max\{q_{\min},\lambda\widetilde c_{\alpha,k-1},\widehat c_{\alpha,k}\}\)
    \State \(\eta=\min\{\eta_{\max},
    s_{OH}((1+\alpha)/\widetilde c_{\alpha,k})^{1/\alpha}
    \normg{g_k}^{(1-\alpha)/\alpha}\}\)
    \While{\(F(x_k-\eta g_k)>F(x_k)-\beta\eta\normg{g_k}^{2}\)}
        \State \(\eta=\rho\eta\)
    \EndWhile
    \State \(x_{k+1}=x_k-\eta g_k\)
\EndFor
\end{algorithmic}
\end{tcolorbox}

The one-sided H\"older proposal is most useful when
\[
    \frac{\pos{\ip{\nabla F(x_k+d_k)-\nabla F(x_k)}{d_k}}}
    {\normg{\nabla F(x_k+d_k)-\nabla F(x_k)}\normg{d_k}+\varepsilon}
    \ll 1.
\]
In this regime, full-norm estimators can interpret irrelevant or favorable
gradient variation as dangerous curvature, whereas the one-sided estimator
measures only the positive curvature component along the proposed descent
direction.

% ============================================================
\section{Main theory}
\label{sec:main-theory}
% ============================================================

This section gives the descent estimate, the explicit adaptive stationarity
rate, and the corresponding safeguarded rate for the implementable line-search
method. Proofs are deferred to Appendix~\ref{app:proofs}.

\begin{lemma}[One-sided H\"older descent]
\label{lem:osh-descent}
Assume that \(F\in C^1\) and that Assumption~\ref{ass:one-sided-holder} holds
on a convex set containing the segment
\(\{x-tg:0\leq t\leq \eta\}\), where \(g=\nabla F(x)\). Then
\begin{equation}
    \label{eq:osh-descent-basic}
    F(x-\eta g)-F(x)
    \leq
    -\eta\normg{g}^{2}
    +
    \frac{c_\alpha}{1+\alpha}
    \eta^{1+\alpha}\normg{g}^{1+\alpha}.
\end{equation}
\end{lemma}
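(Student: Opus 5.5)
We use the fundamental theorem of calculus along the segment and then apply Assumption~\ref{ass:one-sided-holder} pointwise at each point of the segment, paired against the displacement from \(x\). Define \(\phi(t):=F(x-tg)\) for \(t\in[0,\eta]\). Since \(F\in C^1\), \(\phi\) is continuously differentiable with \(\phi'(t)=-\ip{\nabla F(x-tg)}{g}\). Hence
\[
    F(x-\eta g)-F(x)=\int_0^\eta \phi'(t)\,dt
    = -\eta\normg{g}^2-\int_0^\eta \ip{\nabla F(x-tg)-\nabla F(x)}{g}\,dt .
\]
Here we added and subtracted \(\ip{\nabla F(x)}{g}=\normg{g}^2\) inside the integral.

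The key step is to bound the integrand. For \(t\in(0,\eta]\), set \(y_t:=x-tg\), which lies on the segment and hence in the convex set where the assumption holds. Then \(y_t-x=-tg\), so
\[
    -\ip{\nabla F(y_t)-\nabla F(x)}{g}
    =\frac1t\ip{\nabla F(y_t)-\nabla F(x)}{y_t-x}
    \le \frac{c_\alpha}{t}\normg{tg}^{1+\alpha}
    = c_\alpha t^{\alpha}\normg{g}^{1+\alpha}.
\]
This is exactly where the one-sided nature suffices. Only the inner product with the displacement appears, and the sign works out because the displacement \(y_t-x\) is a positive multiple of \(-g\). The point \(t=0\) has measure zero and is irrelevant.

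Integrating gives
\[
    \int_0^\eta c_\alpha t^\alpha\normg{g}^{1+\alpha}\,dt=\frac{c_\alpha}{1+\alpha}\eta^{1+\alpha}\normg{g}^{1+\alpha},
\]
which yields \eqref{eq:osh-descent-basic}. The only mildly delicate point is justifying the integral representation, which needs continuity of \(t\mapsto\nabla F(x-tg)\). This is guaranteed by \(F\in C^1\), so no real obstacle arises. The main thing to check carefully is the sign bookkeeping when converting \(\ip{\cdot}{g}\) into \(\ip{\cdot}{y_t-x}\).
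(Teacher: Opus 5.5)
Your proposal is correct and follows essentially the same route as the paper's proof: the fundamental theorem of calculus along the segment, adding and subtracting \(\nabla F(x)\), applying Assumption~\ref{ass:one-sided-holder} with \(y=x-tg\), dividing by \(t>0\), and integrating \(c_\alpha t^\alpha\normg{g}^{1+\alpha}\). Your sign bookkeeping, which uses the pair \((y_t,x)\) in place of the paper's \((x,y_t)\), is equivalent because the assumption is symmetric in its two arguments.
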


For a given sufficient-decrease parameter \(\beta\in(0,1)\), define
\begin{equation}
\label{eq:def-a-beta}
    a_\beta
    :=
    \left(
        \frac{(1-\beta)(1+\alpha)}{c_\alpha}
    \right)^{1/\alpha}.
\end{equation}

\begin{theorem}[Explicit adaptive stationarity rate]
\label{thm:explicit-adaptive-rate}
Assume \(F_\star:=\inf_{x\in Q}F(x)>-\infty\) and that
Assumption~\ref{ass:one-sided-holder} holds on \(Q\). Let \(a_\beta\) be defined
by \eqref{eq:def-a-beta}. Suppose
\[
    x_{k+1}=x_k-\eta_k g_k,
    \qquad
    \eta_k=a_\beta\normg{g_k}^{(1-\alpha)/\alpha},
\]
and that all segments \(\{x_k-tg_k:0\leq t\leq \eta_k\}\) remain in \(Q\).
Then, for every \(K\ge1\),
\begin{equation}
    \label{eq:explicit-adaptive-rate}
    \min_{0\leq k<K}\normg{\nabla F(x_k)}
    \leq
    \left(
        \frac{F(x_0)-F_\star}
        {\beta a_\beta K}
    \right)^{\alpha/(1+\alpha)}.
\end{equation}
Consequently,
\[
    \min_{0\leq k<K}\normg{\nabla F(x_k)}
    =
    O\left(K^{-\alpha/(1+\alpha)}\right).
\]
\end{theorem}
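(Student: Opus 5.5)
Our approach is to plug the prescribed step into Lemma~\ref{lem:osh-descent}, extract a per-iteration sufficient decrease proportional to \(\normg{g_k}^{(1+\alpha)/\alpha}\), and then telescope.

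Fix \(k\) with \(g_k\neq0\); if some \(g_k=0\), the bound holds trivially, since the minimum is then zero. Since the segment \(\{x_k-tg_k:0\le t\le\eta_k\}\) lies in \(Q\), Lemma~\ref{lem:osh-descent} applies and gives
\[
F(x_{k+1})-F(x_k)\le -\eta_k\normg{g_k}^2\Bigl(1-\tfrac{c_\alpha}{1+\alpha}\eta_k^{\alpha}\normg{g_k}^{\alpha-1}\Bigr).
\]
With \(\eta_k=a_\beta\normg{g_k}^{(1-\alpha)/\alpha}\), we have \(\eta_k^\alpha=a_\beta^\alpha\normg{g_k}^{1-\alpha}\). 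By the definition \eqref{eq:def-a-beta}, \(a_\beta^\alpha=(1-\beta)(1+\alpha)/c_\alpha\), so the bracket equals exactly \(1-(1-\beta)=\beta\). This is the key algebraic step: the step is chosen precisely so that the Hölder remainder consumes only a \((1-\beta)\) fraction of the linear decrease. Hence
\[
F(x_k)-F(x_{k+1})\ge \beta\eta_k\normg{g_k}^2=\beta a_\beta\normg{g_k}^{(1+\alpha)/\alpha}.
\]

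Next, we sum this inequality over \(k=0,\dots,K-1\). The left side telescopes to \(F(x_0)-F(x_K)\le F(x_0)-F_\star\), because \(x_K\in Q\) as the endpoint of the last segment. Therefore
\[
\beta a_\beta K\min_{0\le k<K}\normg{g_k}^{(1+\alpha)/\alpha}\le \beta a_\beta\sum_{k<K}\normg{g_k}^{(1+\alpha)/\alpha}\le F(x_0)-F_\star.
\]
Since \(t\mapsto t^{(1+\alpha)/\alpha}\) is increasing on \([0,\infty)\), the minimum of the powers equals the power of the minimum. Taking the \(\alpha/(1+\alpha)\) power of both sides then yields \eqref{eq:explicit-adaptive-rate}. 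The \(O(K^{-\alpha/(1+\alpha)})\) statement follows immediately, because \(a_\beta\), \(\beta\), and \(F(x_0)-F_\star\) are constants independent of \(K\).

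No step here is a serious obstacle. The only point requiring care is the exponent bookkeeping in the substitution, namely checking that \(\eta_k^{1+\alpha}\normg{g_k}^{1+\alpha}\) combines with \(\eta_k\normg{g_k}^2\) to give the exact factor \((1-\beta)\). A secondary point is confirming that all iterates lie in \(Q\), so that \(F_\star\) is a valid lower bound; this follows from the segment hypothesis.
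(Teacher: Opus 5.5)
Your proof is correct and matches the paper's argument: substitute \(\eta_k=a_\beta\normg{g_k}^{(1-\alpha)/\alpha}\) into Lemma~\ref{lem:osh-descent}, use \(\frac{c_\alpha}{1+\alpha}a_\beta^\alpha=1-\beta\) to obtain the per-step decrease \(\beta a_\beta\normg{g_k}^{(1+\alpha)/\alpha}\), then telescope, take the minimum, and raise to the power \(\alpha/(1+\alpha)\). You also make explicit two points the paper leaves implicit: the trivial case \(g_k=0\), and that \(x_K\in Q\), which is needed for \(F(x_K)\ge F_\star\).
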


\begin{proposition}[Backtracking from a fixed upper step]
\label{prop:backtracking-rate}
Assume \(F_\star>-\infty\) and that Assumption~\ref{ass:one-sided-holder}
holds on all accepted line-search segments. Let the sufficient-decrease
backtracking start from \(\bar\eta>0\) and shrink by \(\rho\in(0,1)\) until
\[
    F(x_k-\eta g_k)
    \leq
    F(x_k)-\beta \eta\normg{g_k}^{2}.
\]
Then the line search terminates at every nonstationary iterate, and the
accepted step satisfies
\[
    \eta_k
    \geq
    \rho
    \min\left\{
        \bar\eta,
        a_\beta\normg{g_k}^{(1-\alpha)/\alpha}
    \right\}.
\]
Moreover, for every \(K\ge1\),
\[
    \min_{0\leq k<K}\normg{\nabla F(x_k)}
    \leq
    \max\left\{
    \left(
        \frac{F(x_0)-F_\star}
        {\beta\rho \bar\eta K}
    \right)^{1/2},
    \left(
        \frac{F(x_0)-F_\star}
        {\beta\rho a_\beta K}
    \right)^{\alpha/(1+\alpha)}
    \right\}.
\]
\end{proposition}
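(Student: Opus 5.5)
The proof has three steps: show that every sufficiently small trial step passes the sufficient-decrease test, deduce the lower bound on the accepted step, and then telescope the decreases into a best-iterate bound.

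\textbf{Step 1: small trial steps are accepted.} By Lemma~\ref{lem:osh-descent}, any trial step \(\eta\) whose segment lies in the region where Assumption~\ref{ass:one-sided-holder} holds satisfies
\[
F(x_k-\eta g_k)-F(x_k)\le -\eta\normg{g_k}^2\Bigl(1-\tfrac{c_\alpha}{1+\alpha}\eta^{\alpha}\normg{g_k}^{\alpha-1}\Bigr).
\]
The bracket is at least \(\beta\) exactly when \(\eta^\alpha\le (1-\beta)(1+\alpha)\normg{g_k}^{1-\alpha}/c_\alpha\). Taking \(\alpha\)-th roots, this is \(\eta\le a_\beta\normg{g_k}^{(1-\alpha)/\alpha}=:\tau_k\). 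So every trial step \(\eta\le\tau_k\) satisfies the sufficient-decrease condition.

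At a nonstationary iterate \(\tau_k>0\), and the trial steps \(\bar\eta\rho^j\) tend to zero, so the line search terminates.

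\textbf{Step 2: lower bound on the accepted step.} If \(\bar\eta\le\tau_k\), the first trial is accepted and \(\eta_k=\bar\eta\). Otherwise let \(\eta_k\) be the accepted step. The previous trial \(\eta_k/\rho\) was rejected, so \(\eta_k/\rho>\tau_k\) by Step 1. In both cases \(\eta_k\ge\rho\min\{\bar\eta,\tau_k\}\).

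\textbf{Main obstacle: where the assumption must hold.} Step 1 needs the descent lemma at the rejected trial steps too, but the hypothesis only covers accepted segments. I would resolve this by reading the hypothesis as applying to the segments examined during backtracking, as in Theorem~\ref{thm:explicit-adaptive-rate}, where the whole relevant segment lies in \(Q\). Under that reading the rejection argument goes through.

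\textbf{Step 3: telescoping.} Summing the sufficient-decrease inequalities gives
\[
\beta\sum_{k<K}\eta_k\normg{g_k}^2\le F(x_0)-F_\star.
\]
Let \(\varepsilon_K=\min_{k<K}\normg{g_k}\). Each term satisfies
\[
\eta_k\normg{g_k}^2\ge\rho\min\bigl\{\bar\eta\normg{g_k}^2,\; a_\beta\normg{g_k}^{(1+\alpha)/\alpha}\bigr\}\ge\rho\min\bigl\{\bar\eta\varepsilon_K^2,\; a_\beta\varepsilon_K^{(1+\alpha)/\alpha}\bigr\},
\]
using that both powers are increasing in \(\normg{g_k}\). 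Hence
\[
K\rho\beta\min\bigl\{\bar\eta\varepsilon_K^2,\; a_\beta\varepsilon_K^{(1+\alpha)/\alpha}\bigr\}\le F(x_0)-F_\star.
\]

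Now split according to which term attains the minimum:
\begin{itemize}
\item If it is the first, then \(\varepsilon_K\le\bigl((F(x_0)-F_\star)/(\beta\rho\bar\eta K)\bigr)^{1/2}\).
\item If it is the second, then \(\varepsilon_K\le\bigl((F(x_0)-F_\star)/(\beta\rho a_\beta K)\bigr)^{\alpha/(1+\alpha)}\).
\end{itemize}
Therefore \(\varepsilon_K\) is bounded by the maximum of the two expressions, which is the claimed estimate.
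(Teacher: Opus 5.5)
Your proof is correct and follows the paper's argument: the Armijo-safe threshold \(\tau_k=a_\beta\normg{g_k}^{(1-\alpha)/\alpha}\) from Lemma~\ref{lem:osh-descent}, the geometric-grid lower bound on the accepted step, and telescoping against \(\min\{\bar\eta\,\varepsilon_K^2,\,a_\beta\varepsilon_K^{(1+\alpha)/\alpha}\}\), where you write out the case split that the paper dismisses as ``the stated bound follows''. Your remark that the lower bound needs the one-sided H\"older bound on \(\{x_k-tg_k:0\le t\le\min\{\bar\eta,\tau_k\}\}\), which can extend beyond the accepted segment, is valid; the paper relies on this silently even though its hypothesis mentions only accepted segments. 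One small fix in Step~2: split on whether \(\eta_k=\bar\eta\) rather than on whether \(\bar\eta\le\tau_k\). The first trial can be accepted even when \(\bar\eta>\tau_k\), and then there is no rejected predecessor \(\eta_k/\rho\), though the bound holds trivially in that case.
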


\begin{proposition}[Estimator-to-rate condition for OSH]
\label{prop:estimator-to-rate}
Let the one-sided H\"older proposal be
\[
\bar\eta_k
=
s
\left(
\frac{1+\alpha}{\widetilde c_{\alpha,k}}
\right)^{1/\alpha}
\normg{g_k}^{(1-\alpha)/\alpha},
\qquad s>0,
\]
and suppose that
\[
    q_{\min}\leq \widetilde c_{\alpha,k}\leq C_{\max}
    \qquad\text{for all }k.
\]
If sufficient-decrease backtracking starts from \(\bar\eta_k\) and shrinks by
\(\rho\in(0,1)\), then the accepted step satisfies
\[
\eta_k
\geq
\rho
\min\left\{
s\left(\frac{1+\alpha}{C_{\max}}\right)^{1/\alpha},
a_\beta
\right\}
\normg{g_k}^{(1-\alpha)/\alpha}.
\]
In particular, the accepted OSH steps satisfy the lower step-size condition
used in Theorem~\ref{thm:osh-rate}.
\end{proposition}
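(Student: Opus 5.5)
The argument is a direct combination of Lemma~\ref{lem:osh-descent} with the standard backtracking argument, as in Proposition~\ref{prop:backtracking-rate}. The only new feature is that the initial trial step \(\bar\eta_k\) scales with \(\normg{g_k}^{(1-\alpha)/\alpha}\). So the lower bound should come out directly in that form, without a separate constant-step branch.

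\emph{Step 1: every step up to the threshold is accepted.} Set \(\tau_k:=a_\beta\normg{g_k}^{(1-\alpha)/\alpha}\). Take any \(\eta\in(0,\tau_k]\) whose segment lies in the region where Assumption~\ref{ass:one-sided-holder} holds. Lemma~\ref{lem:osh-descent} gives
\[
F(x_k-\eta g_k)-F(x_k)\le -\eta\normg{g_k}^2\Bigl(1-\tfrac{c_\alpha}{1+\alpha}\eta^\alpha\normg{g_k}^{\alpha-1}\Bigr).
\]
For \(\eta\le\tau_k\) we have \(\eta^\alpha\normg{g_k}^{\alpha-1}\le a_\beta^\alpha=(1-\beta)(1+\alpha)/c_\alpha\). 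Hence the bracket is at least \(\beta\), and the sufficient-decrease test holds. Consequently, the test can fail only when the current trial step satisfies \(\eta>\tau_k\).

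\emph{Step 2: case split on the accepted step.} If \(\bar\eta_k\) itself is accepted, then \(\eta_k=\bar\eta_k\). Using \(\widetilde c_{\alpha,k}\le C_{\max}\),
\[
\eta_k\ge s\bigl((1+\alpha)/C_{\max}\bigr)^{1/\alpha}\normg{g_k}^{(1-\alpha)/\alpha}\ge\rho\, s\bigl((1+\alpha)/C_{\max}\bigr)^{1/\alpha}\normg{g_k}^{(1-\alpha)/\alpha}.
\]
Otherwise the accepted step is \(\eta_k=\rho\eta'\), where \(\eta'\) is the last rejected trial. By Step~1, \(\eta'>\tau_k\), so \(\eta_k>\rho a_\beta\normg{g_k}^{(1-\alpha)/\alpha}\). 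In both cases the claimed bound with the minimum holds. Termination at nonstationary iterates also follows from Step~1, since the trial steps \(\rho^j\bar\eta_k\) eventually fall below \(\tau_k>0\).

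\emph{Step 3: role of the bounds on \(\widetilde c_{\alpha,k}\).} The lower bound \(q_{\min}\le\widetilde c_{\alpha,k}\) is needed only so that \(\bar\eta_k\) is finite and well defined. The upper bound \(C_{\max}\) supplies the first entry of the minimum. The final sentence of the statement is then immediate. The accepted steps satisfy \(\eta_k\ge\underline a\,\normg{g_k}^{(1-\alpha)/\alpha}\) with
\[
\underline a:=\rho\min\{s((1+\alpha)/C_{\max})^{1/\alpha},a_\beta\},
\]
which is the lower step-size condition that Theorem~\ref{thm:osh-rate} presumably requires. Summing \(F(x_k)-F(x_{k+1})\ge\beta\underline a\normg{g_k}^{(1+\alpha)/\alpha}\) over \(k\) would then give the \(K^{-\alpha/(1+\alpha)}\) rate, as in Theorem~\ref{thm:explicit-adaptive-rate}.

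The main obstacle is the domain hypothesis in Step~1. Lemma~\ref{lem:osh-descent} requires the rejected trial segments as well, not only the accepted ones, to lie in the set where Assumption~\ref{ass:one-sided-holder} holds. I would handle this by assuming that \(Q\) contains all trial segments, consistent with the standing convention that \(Q\) contains the line-search segments. Alternatively, one can note that only trial steps \(\le\tau_k\) need the lemma, and these are contained in the final accepted-or-longer segment.
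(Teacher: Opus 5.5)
Your argument is correct and matches the paper's proof. The paper bounds $\bar\eta_k$ below via $\widetilde c_{\alpha,k}\le C_{\max}$ and then uses the geometric-grid bound $\eta_k\ge\rho\min\{\bar\eta_k,a_\beta\normg{g_k}^{(1-\alpha)/\alpha}\}$ exactly as in Proposition~\ref{prop:backtracking-rate}; your case split on the last rejected trial simply spells that bound out.

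On the domain point, your first fix is the right one: assume $Q$ contains the trial segments, and in fact the segment of length $\min\{\bar\eta_k,a_\beta\normg{g_k}^{(1-\alpha)/\alpha}\}$ suffices. Your ``alternative,'' however, does not reduce matters to accepted segments. The rejected trial you must rule out is longer than the accepted step, so regularity on accepted segments alone does not cover it.
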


\begin{theorem}[Safeguarded adaptive one-sided H\"older rate]
\label{thm:osh-rate}
Assume \(F_\star>-\infty\). Suppose the accepted iterates satisfy the
sufficient-decrease inequality
\[
    F(x_{k+1})
    \leq
    F(x_k)-\beta\eta_k\normg{g_k}^{2},
    \qquad \beta\in(0,1),
\]
and that there exists \(a>0\) such that
\begin{equation}
    \label{eq:adaptive-step-lower}
    \eta_k
    \geq
    a\normg{g_k}^{(1-\alpha)/\alpha}
    \qquad\text{for all }k.
\end{equation}
Then, for every \(K\geq1\),
\[
    \min_{0\leq k<K}\normg{g_k}
    \leq
    \left(
        \frac{F(x_0)-F_\star}
        {\beta a K}
    \right)^{\alpha/(1+\alpha)}.
\]
\end{theorem}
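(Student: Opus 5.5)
The plan is a telescoping argument that combines the two hypotheses directly. Substituting the lower step-size bound \eqref{eq:adaptive-step-lower} into the sufficient-decrease inequality gives, for every \(k\),
\[
    F(x_{k+1})
    \leq
    F(x_k)-\beta a\normg{g_k}^{(1-\alpha)/\alpha}\normg{g_k}^{2}
    =
    F(x_k)-\beta a\normg{g_k}^{(1+\alpha)/\alpha}.
\]
The exponent is checked by \((1-\alpha)/\alpha+2=(1+\alpha)/\alpha\). This step uses only \(\beta>0\) and \(\normg{g_k}\ge0\). If some \(g_k=0\), the inequality still holds trivially, and the claimed bound on the minimum is immediate anyway.

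Next, I sum this inequality over \(k=0,\dots,K-1\). The telescoping gives
\[
    \beta a\sum_{k=0}^{K-1}\normg{g_k}^{(1+\alpha)/\alpha}
    \leq
    F(x_0)-F(x_K)
    \leq
    F(x_0)-F_\star .
\]
The last inequality needs \(F(x_K)\geq F_\star\). Since the iterates lie in \(Q\) (this is the standing convention of Section~\ref{sec:regularity-solvers}) and \(F_\star\) is the infimum over \(Q\), this holds. If \(F_\star\) is read as a global infimum, it holds trivially.

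Finally, I bound the sum from below by \(K\) times its smallest term:
\[
    K\min_{0\leq k<K}\normg{g_k}^{(1+\alpha)/\alpha}
    \leq
    \sum_{k=0}^{K-1}\normg{g_k}^{(1+\alpha)/\alpha}
    \leq
    \frac{F(x_0)-F_\star}{\beta a}.
\]
Dividing by \(K\) and raising both sides to the power \(\alpha/(1+\alpha)>0\) yields the stated bound. The map \(t\mapsto t^{\alpha/(1+\alpha)}\) is increasing on \([0,\infty)\), so it commutes with the minimum. There is no real obstacle here. The only points needing care are the exponent bookkeeping and confirming that \(F(x_K)\ge F_\star\) under the paper's convention that the iterates stay in \(Q\). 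Theorem~\ref{thm:explicit-adaptive-rate} is then the special case \(a=a_\beta\), where Lemma~\ref{lem:osh-descent} supplies the sufficient decrease.
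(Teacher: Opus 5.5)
Your proposal is correct and follows the paper's proof essentially step for step. Like the paper, you substitute the lower step bound into the sufficient-decrease inequality using $2+(1-\alpha)/\alpha=(1+\alpha)/\alpha$, telescope against $F(x_K)\ge F_\star$, bound the sum below by $K$ times its smallest term, and raise to the power $\alpha/(1+\alpha)$; your remark that $F(x_K)\ge F_\star$ relies on the iterates staying in $Q$ makes explicit a point the paper uses silently.
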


The explicit rule gives the clean H\"older stationarity rate when the
one-sided curvature scale is known. The practical OSH method replaces this
unknown scale by a local one-sided curvature estimate and uses
sufficient-decrease backtracking to prevent unstable steps. Proposition
\ref{prop:estimator-to-rate} shows that, whenever the estimated curvature
remains uniformly bounded above along the trajectory, the accepted OSH steps
retain the same gradient-dependent lower scale required by
Theorem~\ref{thm:osh-rate}.

\begin{theorem}[Global convergence of sufficient-decrease backtracking]
\label{thm:armijo-global}
Assume that \(F\in C^1(\mathbb R^d,\mathbb R)\) is bounded from below and
coercive, that Assumption~\ref{ass:one-sided-holder} holds on
\(\mathbb R^d\), and that every critical point of \(F\) is isolated. Let
\(\bar\eta>0\), \(\rho\in(0,1)\), and \(\beta\in(0,1)\). At each nonstationary
iterate, choose the largest
\[
    \eta_k\in\{\bar\eta,\rho\bar\eta,\rho^2\bar\eta,\ldots\}
\]
such that
\[
    F(x_k-\eta_k\nabla F(x_k))
    \le
    F(x_k)-\beta\eta_k\normg{\nabla F(x_k)}^2.
\]
Then the whole sequence \((x_k)\) converges to a critical point of \(F\). If
\(F\) has a unique critical point \(x_\star\), then \(x_k\to x_\star\).
\end{theorem}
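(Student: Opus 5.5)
The plan is to combine the step-size lower bound from Proposition~\ref{prop:backtracking-rate} with compactness from coercivity, and then use isolatedness of critical points together with a vanishing-step-length argument to upgrade subsequential convergence to convergence of the whole sequence.

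\textbf{Step 1: well-posedness and bounded iterates.} By Proposition~\ref{prop:backtracking-rate}, applicable since Assumption~\ref{ass:one-sided-holder} holds on all of \(\R^d\), the line search terminates at every nonstationary iterate. If some \(g_k=0\), the sequence stops at a critical point and there is nothing to prove, so assume \(g_k\neq0\) for all \(k\). Sufficient decrease makes \(F(x_k)\) nonincreasing, so every \(x_k\) lies in the sublevel set \(\{F\le F(x_0)\}\). This set is compact by coercivity and continuity. Summing the decrease inequality gives
\[
    \sum_k \eta_k\normg{g_k}^2\le \frac{F(x_0)-F_\star}{\beta}<\infty .
\]

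\textbf{Step 2: gradients vanish.} Proposition~\ref{prop:backtracking-rate} gives
\[
    \eta_k\ge\rho\min\{\bar\eta,\ a_\beta\normg{g_k}^{(1-\alpha)/\alpha}\}.
\]
Hence
\[
    \eta_k\normg{g_k}^2\ge \rho\min\{\bar\eta\normg{g_k}^2,\ a_\beta\normg{g_k}^{(1+\alpha)/\alpha}\}.
\]
The sum of these terms is finite, so both branches force \(\normg{g_k}\to0\). Every cluster point \(\bar x\) (and at least one exists by compactness) therefore satisfies \(\nabla F(\bar x)=0\) by continuity of \(\nabla F\). Moreover,
\[
    \normg{x_{k+1}-x_k}=\eta_k\normg{g_k}\le\bar\eta\normg{g_k}\to0 .
\]

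\textbf{Step 3: from cluster points to convergence (the main obstacle).} Let \(\Omega\) be the set of cluster points. Since \(\normg{x_{k+1}-x_k}\to0\) and the sequence is bounded, \(\Omega\) is nonempty, compact and connected; this is the standard Ostrowski lemma. I would reprove it briefly: if \(\Omega=A\cup B\) with \(A,B\) disjoint nonempty closed sets at distance \(\delta>0\), then the iterates move in steps eventually smaller than \(\delta/3\) and visit both the \(\delta/3\)-neighbourhood of \(A\) and that of \(B\) infinitely often. They must therefore lie infinitely often in the region at distance at least \(\delta/3\) from both \(A\) and \(B\), which is compact within the sublevel set. This yields a cluster point outside \(\Omega\), a contradiction. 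Now \(\Omega\) is contained in the critical set, and each point of \(\Omega\) is an isolated critical point. A compact set of isolated points is finite, and a finite connected set is a single point, so \(\Omega=\{x^\ast\}\). A bounded sequence with a unique cluster point converges, so \(x_k\to x^\ast\) and \(\nabla F(x^\ast)=0\).

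If \(F\) has a unique critical point \(x_\star\), then \(x^\ast=x_\star\) immediately. The delicate point is Step 3: isolatedness alone does not stop oscillation between critical points, so the vanishing increments \(\normg{x_{k+1}-x_k}\to0\) from Step 2, which rely on the upper bound \(\eta_k\le\bar\eta\), are essential.
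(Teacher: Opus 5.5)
Your proposal is correct and follows the paper's proof essentially step for step: sufficient decrease with the geometric-grid lower bound $\eta_k\ge\rho\min\{\bar\eta,a_\beta\|\nabla F(x_k)\|^{(1-\alpha)/\alpha}\}$ gives summability and $\|\nabla F(x_k)\|\to0$, coercivity gives compactness, $\eta_k\le\bar\eta$ gives vanishing increments, and connectedness of the cluster set combined with isolatedness of critical points forces a single limit. The differences are minor: you cite Proposition~\ref{prop:backtracking-rate} instead of re-deriving termination and the step bound, you sketch a proof of the Ostrowski connectedness lemma that the paper only invokes as standard, and you get finiteness from compactness of the cluster set itself rather than of the critical points in the sublevel set.
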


The stationarity guarantees above quantify convergence of the gradient norm.
Objective-gap or distance-to-solution rates require additional geometry. One
standard consequence is the following.

% ============================================================
\section{Numerical experiments}
\label{sec:experiments}
% ============================================================

We evaluate OSH on two controlled full-batch benchmarks designed to separate
positive directional curvature from full gradient variation: a controlled binary
classification problem and a controlled nonconvex H\"older regression problem.
We compare fixed-step GD, diminishing-step GD, global Lipschitz curvature (GL),
one-sided Lipschitz curvature (OSL), global H\"older curvature (GH), and OSH.
All curvature-based methods use the same sufficient-decrease backtracking
safeguard. AdamW is omitted from the main comparison because the paper studies
scalar curvature-adaptive gradient descent, whereas AdamW is a coordinatewise
moment-based optimizer.

All experiments are deterministic full-batch runs. Randomness enters only
through the random seeds used for initialization and, where applicable, the
controlled problem instance. Results are reported as mean \(\pm\) standard
deviation over \(20\) seeds. The optimizer type, number of iterations, objective parameters, and reported metrics are specified below.

For classification, we report final cross-entropy, objective value, gradient
norm, and classification margin. Since all methods reach perfect accuracy on
this controlled task, accuracy is not informative and is not used as a primary
metric. For regression, we report final objective gap, final gradient norm,
median accepted step size, and a curvature diagnostic \(\Gamma_k\). The
stationarity-oriented metrics, especially the final gradient norm, directly
match the convergence theory.

% ------------------------------------------------------------
\subsection{Controlled binary classification}
\label{subsec:controlled-classification}
% ------------------------------------------------------------

The classification objective uses \(x=(u,v_1,\ldots,v_d)\), margin
\[
m(u,v)=b+\frac{s}{d}\sum_{j=1}^d v_j-q u^2,
\qquad
\ell_{\rm cls}(u,v)=\log(1+\exp(-m(u,v))),
\]
and
\[
F_{\rm cls}(u,v)=
\ell_{\rm cls}(u,v)
+\frac{\lambda |u|^{1+\alpha}}{1+\alpha}
+\frac{\gamma}{4}\sum_{j=1}^d (v_j^2-1)^2 .
\]
Thus the classification loss depends on the same variables that generate the
H\"older and nonconvex curvature effects. We use \(\alpha=0.7\), \(d=20\),
\(\lambda=0.5\), \(\gamma=30\), and \(30\) iterations. Table~\ref{tab:controlled-classification-results}
shows that OSH is best on all reported metrics.
\begin{table}[H]
\centering
\caption{Controlled binary classification over \(20\) seeds. Lower is better
except for margin.}
\label{tab:controlled-classification-results}
\scriptsize
\setlength{\tabcolsep}{2.6pt}
\renewcommand{\arraystretch}{0.88}
\resizebox{\linewidth}{!}{%
\begin{tabular}{lcccc}
\toprule
Method & Cross-ent. \(\downarrow\) & Objective \(\downarrow\) &
Grad. norm \(\downarrow\) & Margin \(\uparrow\) \\
\midrule
Fixed GD
& \(5.07{\times}10^{-2} \pm 8.18{\times}10^{-4}\)
& \(9.99{\times}10^{-2} \pm 3.65{\times}10^{-3}\)
& \(3.11{\times}10^{-1} \pm 8.89{\times}10^{-3}\)
& \(2.956 \pm 0.017\) \\
Diminishing GD
& \(5.62{\times}10^{-2} \pm 1.29{\times}10^{-3}\)
& \(1.23{\times}10^{-1} \pm 5.10{\times}10^{-3}\)
& \(3.63{\times}10^{-1} \pm 1.15{\times}10^{-2}\)
& \(2.850 \pm 0.023\) \\
GL
& \(4.24{\times}10^{-1} \pm 5.26{\times}10^{-2}\)
& \(6.42{\times}10^{1} \pm 7.86\)
& \(5.02{\times}10^{1} \pm 6.84{\times}10^{-1}\)
& \(0.644 \pm 0.151\) \\
OSL
& \(4.29{\times}10^{-2} \pm 9.52{\times}10^{-4}\)
& \(8.73{\times}10^{-1} \pm 6.14{\times}10^{-1}\)
& \(8.72 \pm 2.28\)
& \(3.129 \pm 0.022\) \\
GH
& \(4.60{\times}10^{-2} \pm 1.58{\times}10^{-3}\)
& \(7.70{\times}10^{-2} \pm 8.16{\times}10^{-3}\)
& \(2.45{\times}10^{-1} \pm 2.52{\times}10^{-2}\)
& \(3.058 \pm 0.035\) \\
OSH
& \(\mathbf{4.07{\times}10^{-2} \pm 3.02{\times}10^{-3}}\)
& \(\mathbf{4.17{\times}10^{-2} \pm 2.87{\times}10^{-3}}\)
& \(\mathbf{4.91{\times}10^{-2} \pm 1.79{\times}10^{-2}}\)
& \(\mathbf{3.183 \pm 0.066}\) \\
\bottomrule
\end{tabular}%
}
\end{table}

\begin{figure}[!t]
\centering
\begin{subfigure}{0.48\linewidth}
    \centering
    \includegraphics[width=\linewidth]{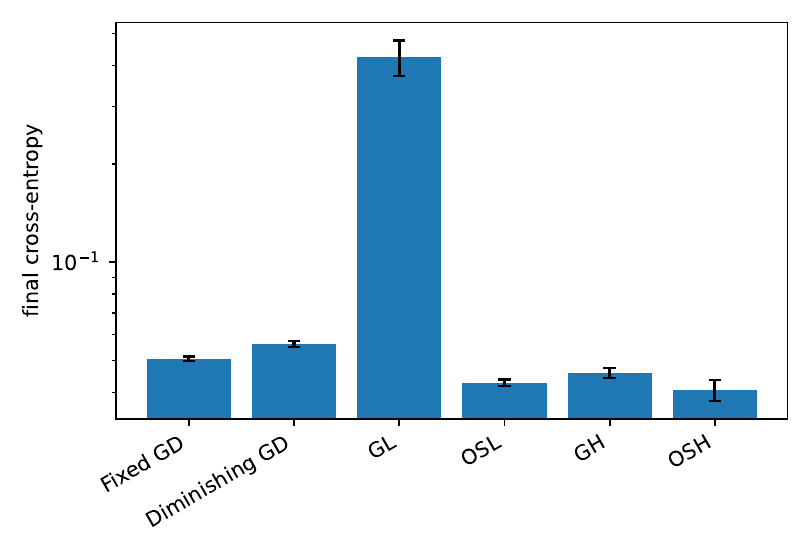}
    \caption{Final cross-entropy.}
\end{subfigure}
\hfill
\begin{subfigure}{0.48\linewidth}
    \centering
    \includegraphics[width=\linewidth]{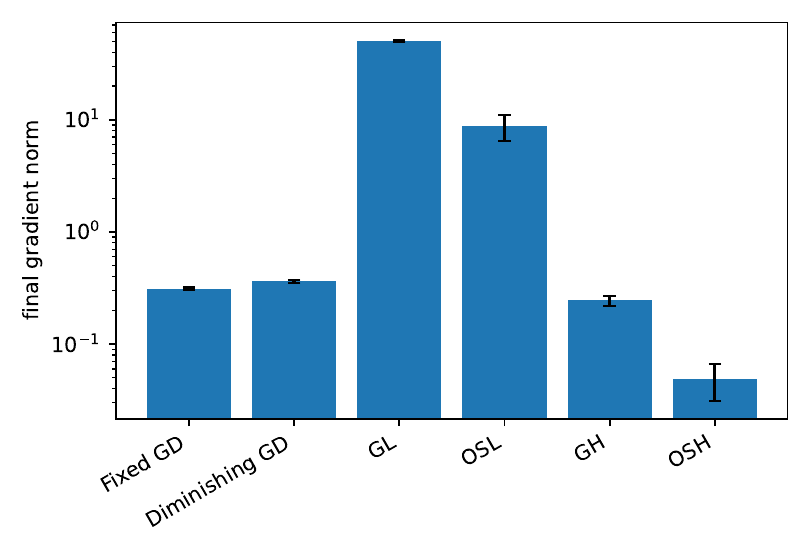}
    \caption{Final gradient norm.}
\end{subfigure}

\begin{subfigure}{0.48\linewidth}
    \centering
    \includegraphics[width=\linewidth]{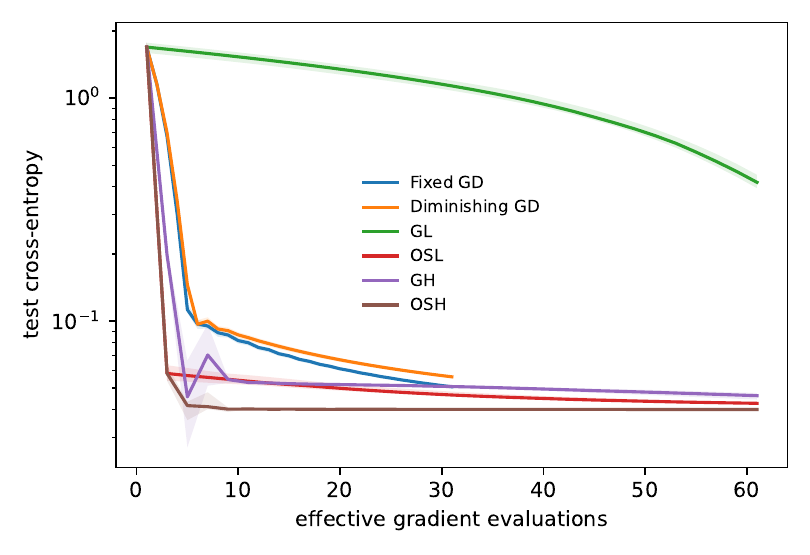}
    \caption{Cross-entropy trajectory.}
\end{subfigure}
\hfill
\begin{subfigure}{0.48\linewidth}
    \centering
    \includegraphics[width=\linewidth]{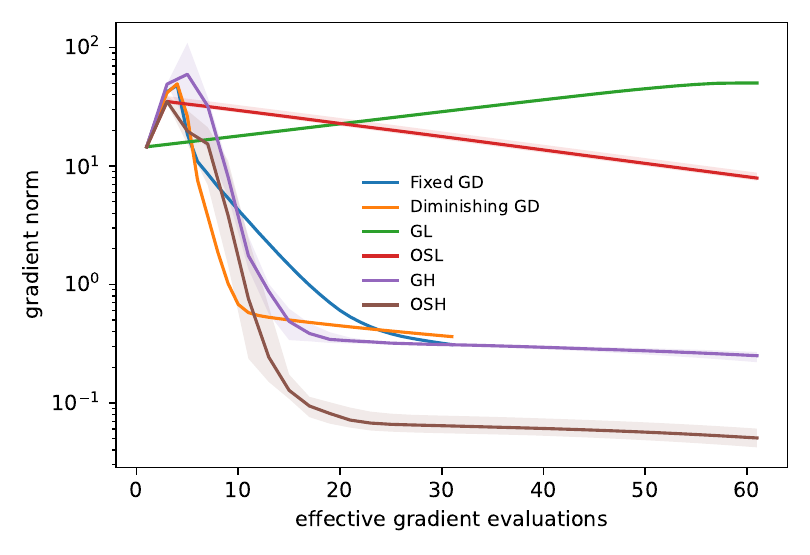}
    \caption{Gradient-norm trajectory.}
\end{subfigure}

\caption{Controlled binary classification. The top row reports final metric
summaries over \(20\) seeds, and the bottom row reports median trajectories with
interquartile bands. OSH achieves the lowest final cross-entropy and gradient
norm and shows the strongest stationarity behavior.}
\label{fig:controlled-classification}
\vspace{-0.5em}
\end{figure}

\FloatBarrier

% ------------------------------------------------------------
\subsection{Controlled nonconvex H\"older regression}
\label{subsec:controlled-regression}
% ------------------------------------------------------------

The regression benchmark is
\[
F_{\rm reg}(u,v)=
\frac{|u|^{1+\alpha}}{1+\alpha}
+\frac{\gamma}{4}\sum_{j=1}^{d}(v_j^2-1)^2,
\qquad
\alpha=\frac12,
\]
with \(F_\star=0\), \(d=20\), \(\gamma=10\), and initialization
\(u_0\approx1\), \(v_{0,j}\approx0.1\). We run \(50\) iterations. This benchmark
directly tests the stationarity mechanism because the H\"older term determines
the gradient-dependent step scaling, while the nonconvex quartic terms create
directions where full-norm curvature can be less informative than one-sided
directional curvature. Table~\ref{tab:controlled-regression-results} shows that
OSH gives the lowest final objective gap and final gradient norm.
\begin{table}[H]
\centering
\caption{Controlled nonconvex H\"older regression over \(20\) seeds. Lower is
better for final gap and gradient norm.}
\label{tab:controlled-regression-results}
\scriptsize
\setlength{\tabcolsep}{2.6pt}
\renewcommand{\arraystretch}{0.88}
\resizebox{\linewidth}{!}{%
\begin{tabular}{lcccc}
\toprule
Method & Final gap \(\downarrow\) & Final grad. norm \(\downarrow\) &
Median step & Median \(\Gamma_k\) \\
\midrule
Fixed GD
& \(1.04{\times}10^{-5} \pm 1.74{\times}10^{-21}\)
& \(2.50{\times}10^{-2} \pm 3.56{\times}10^{-18}\)
& \(5.00{\times}10^{-2} \pm 7.12{\times}10^{-18}\)
& -- \\
Diminishing GD
& \(1.89{\times}10^{-6} \pm 2.17{\times}10^{-22}\)
& \(1.41{\times}10^{-2} \pm 0.00\)
& \(3.96{\times}10^{-2} \pm 7.12{\times}10^{-18}\)
& -- \\
GL
& \(1.46{\times}10^{-7} \pm 6.37{\times}10^{-7}\)
& \(1.06{\times}10^{-3} \pm 3.72{\times}10^{-3}\)
& \(3.39{\times}10^{-2} \pm 1.14{\times}10^{-2}\)
& \(1.009 \pm 0.013\) \\
OSL
& \(5.13{\times}10^{-7} \pm 9.00{\times}10^{-7}\)
& \(4.37{\times}10^{-3} \pm 6.30{\times}10^{-3}\)
& \(3.61{\times}10^{-2} \pm 1.58{\times}10^{-2}\)
& \(1.005 \pm 0.017\) \\
GH
& \(1.59{\times}10^{-9} \pm 1.23{\times}10^{-9}\)
& \(7.45{\times}10^{-4} \pm 2.29{\times}10^{-4}\)
& \(2.70{\times}10^{-3} \pm 1.06{\times}10^{-3}\)
& \(1.041 \pm 0.030\) \\
OSH
& \(\mathbf{2.52{\times}10^{-10} \pm 4.92{\times}10^{-10}}\)
& \(\mathbf{3.06{\times}10^{-4} \pm 3.30{\times}10^{-4}}\)
& \(1.08{\times}10^{-2} \pm 1.83{\times}10^{-2}\)
& \(1.013 \pm 0.031\) \\
\bottomrule
\end{tabular}%
}
\end{table}
Across both benchmarks, OSH is best on the metrics aligned with the theory. In
classification, it gives the lowest final cross-entropy, objective value, and
gradient norm, and the largest margin. In nonconvex H\"older regression, it gives the lowest final objective gap and gradient norm. These results indicate that one-sided H\"older curvature can provide a useful scalar step-size signal when
full-gradient variation is inflated by directions that are irrelevant or
favorable for descent.

\begin{figure}[!t]
\centering
\begin{subfigure}{0.48\linewidth}
    \centering
    \includegraphics[width=\linewidth]{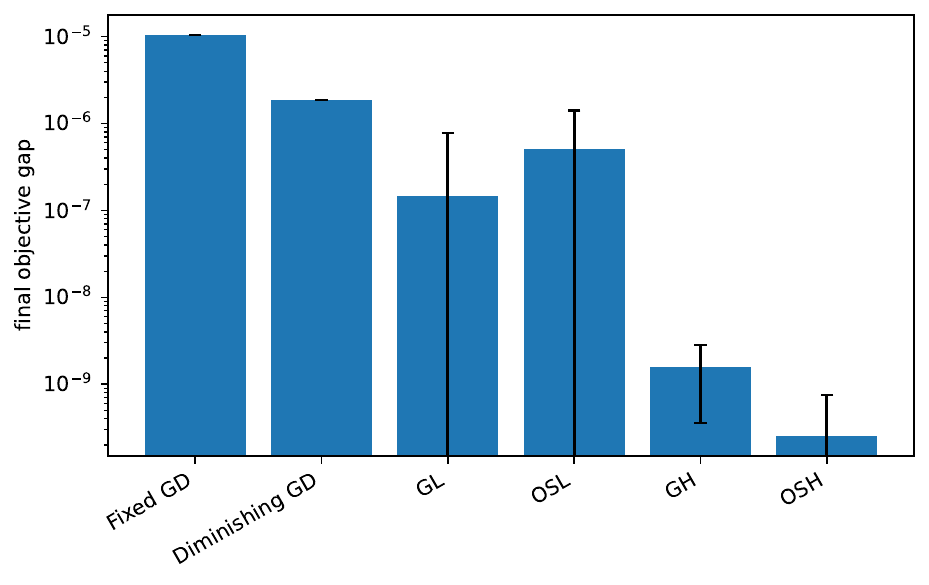}
    \caption{Final objective gap.}
\end{subfigure}
\hfill
\begin{subfigure}{0.45\linewidth}
    \centering
    \includegraphics[width=\linewidth]{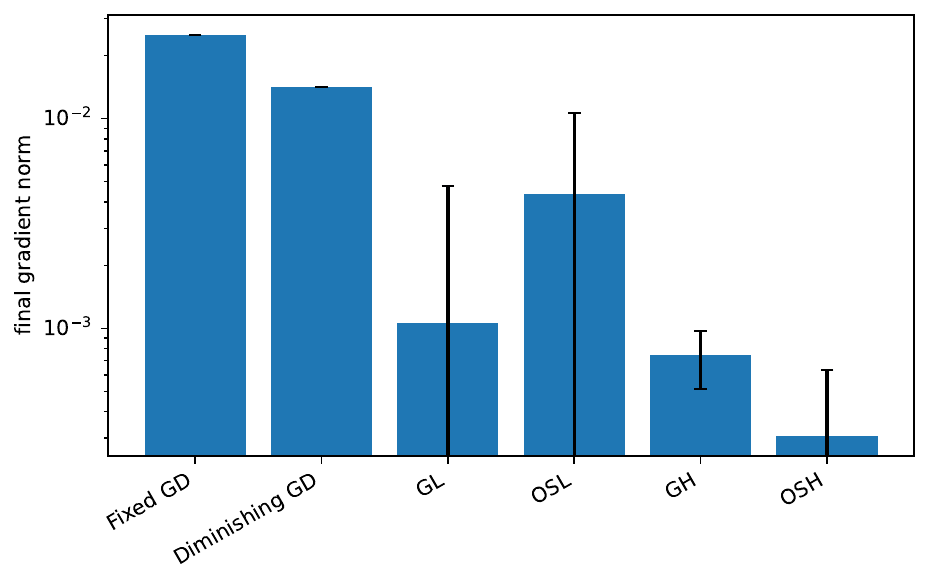}
    \caption{Final gradient norm.}
\end{subfigure}

\begin{subfigure}{0.48\linewidth}
    \centering
    \includegraphics[width=\linewidth]{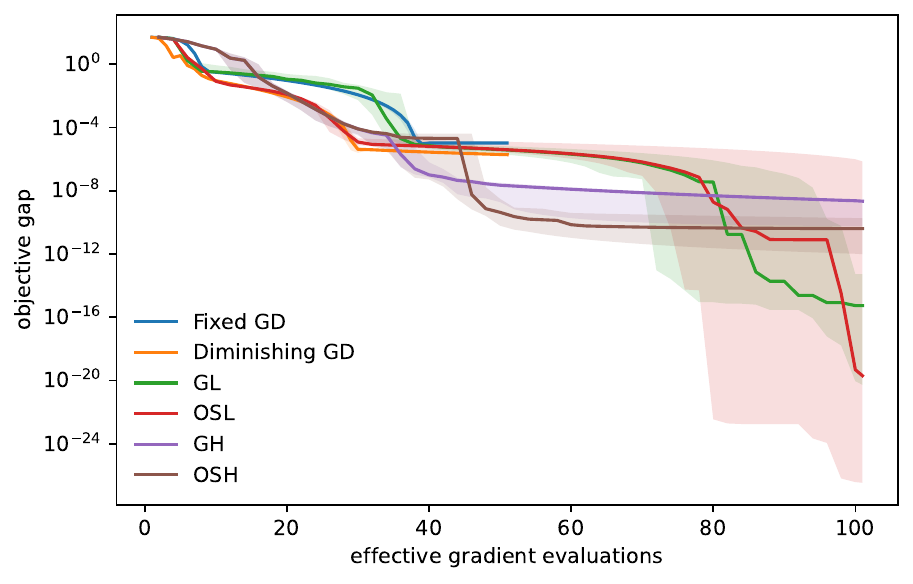}
    \caption{Objective-gap trajectory.}
\end{subfigure}
\hfill
\begin{subfigure}{0.48\linewidth}
    \centering
    \includegraphics[width=\linewidth]{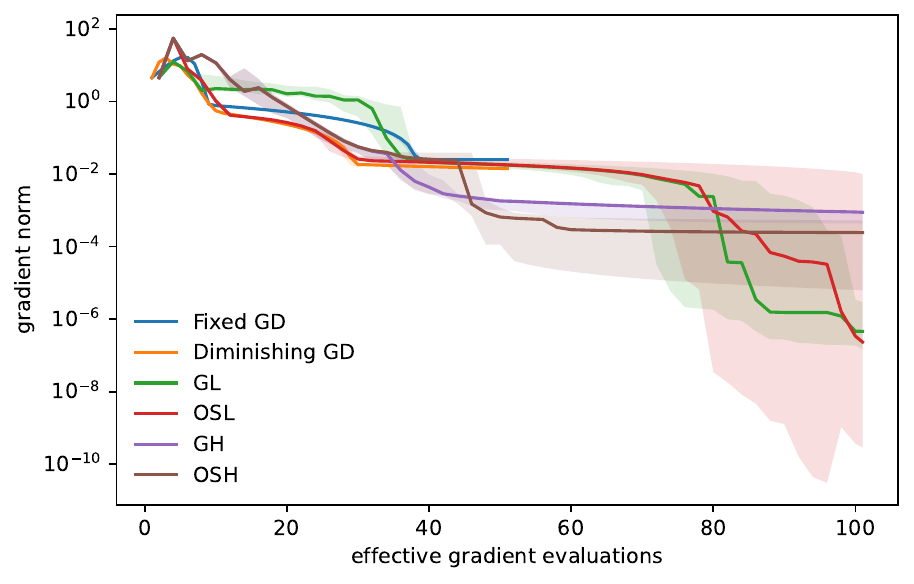}
    \caption{Gradient-norm trajectory.}
\end{subfigure}

\vspace{-0.2em}
\caption{Controlled nonconvex H\"older regression. The top row reports final
metric summaries over \(20\) seeds, and the bottom row reports median
trajectories with interquartile bands. OSH obtains the lowest final objective
gap and final gradient norm.}
\label{fig:controlled-regression}

\end{figure}

\FloatBarrier

% ============================================================
\section{Conclusion}
\label{sec:conclusion}
\paragraph{Discussion.}
We introduced an adaptive scalar gradient method based on one-sided H\"older curvature. The central idea is to estimate only the positive directional curvature that enters the descent inequality, rather than the full norm of the gradient variation. This distinction is important when full-gradient variation is dominated by components that are orthogonal to the descent direction or favorable for descent. Under one-sided H\"older regularity, we proved a best-iterate stationarity guarantee for the adaptive step-size rule, together with safeguarded variants that preserve sufficient decrease in practical
implementations. The numerical experiments support the proposed mechanism. In the controlled binary classification benchmark, OSH achieves the lowest final cross-entropy, objective value, and gradient norm, and the largest final margin. In the controlled nonconvex H\"older regression benchmark, OSH obtains the
lowest final objective gap and final gradient norm. These results indicate that one-sided H\"older curvature can provide a more informative scalar step-size signal than full-norm curvature estimates when the optimization geometry is strongly direction-dependent.
\paragraph{Limitations.}
The experiments are intentionally controlled and full-batch, so they should be
viewed as mechanism-focused evidence rather than a broad empirical comparison
against modern stochastic optimizers. In particular, coordinatewise
moment-based methods such as AdamW are outside the main scope of this paper,
because the theory concerns scalar curvature-adaptive gradient descent. The
proposed method also requires additional gradient evaluations for curvature
probing and may therefore be more expensive per iteration than fixed-step or
diminishing-step gradient descent. The theoretical results assume that the
accepted line-search segments remain in a region where one-sided H\"older
regularity holds, and the practical estimator-to-rate result depends on
boundedness of the adaptive curvature estimate. Future work should study
stochastic variants, lower-cost curvature probes, data-driven choices of the
probe radius, and larger-scale learning problems where one-sided curvature
effects can be separated from minibatch noise and model-selection effects.

{\small
\bibliographystyle{plainnat}
\bibliography{references}
}

\appendix

% ============================================================
\section{Proofs}
\label{app:proofs}
% ============================================================

%\subsection{Proof of Proposition~\ref{prop:quadratic-gap}}

%For a quadratic objective,
%\[   \nabla F(x)-\nabla F(y)=A(x-y).
%\]
%Hence\[\normg{\nabla F(x)-\nabla F(y)}\leq\normg{A}_2\normg{x-y},\] and the best such constant is \(\normg{A}_2\). Also, \[  \ip{\nabla F(x)-\nabla F(y)}{x-y} =(x-y)^\top A(x-y)\leq\lambda_{\max}(A)\normg{x-y}^{2}.\] For \(A=\operatorname{diag}(1,-M)\), \(\normg{A}_2=M\) and \(\lambda_{\max}(A)=1\). Therefore the full Lipschitz constant is \(M\), while the one-sided Lipschitz curvature constant is \(1\).

\subsection{Proof of Lemma~\ref{lem:osh-descent}}

By the fundamental theorem of calculus,
\begin{align}
    F(x-\eta g)-F(x)
    &=
    -\int_0^\eta
    \ip{\nabla F(x-tg)}{g}\,dt
    \\
    &=
    -\eta\normg{g}^{2}
    +
    \int_0^\eta
    \ip{\nabla F(x)-\nabla F(x-tg)}{g}\,dt.
\end{align}
Applying Assumption~\ref{ass:one-sided-holder} with \(y=x-tg\) gives
\[
    \ip{\nabla F(x)-\nabla F(x-tg)}{tg}
    \leq
    c_\alpha t^{1+\alpha}\normg{g}^{1+\alpha}.
\]
Dividing by \(t>0\) and integrating yields
\[
    \int_0^\eta
    \ip{\nabla F(x)-\nabla F(x-tg)}{g}\,dt
    \leq
    c_\alpha\normg{g}^{1+\alpha}
    \int_0^\eta t^\alpha\,dt.
\]
Therefore
\[
    F(x-\eta g)-F(x)
    \leq
    -\eta\normg{g}^{2}
    +
    \frac{c_\alpha}{1+\alpha}
    \eta^{1+\alpha}\normg{g}^{1+\alpha}.
\]

\subsection{Proof of Theorem~\ref{thm:explicit-adaptive-rate}}

Using Lemma~\ref{lem:osh-descent} with
\[
    \eta_k=a_\beta\normg{g_k}^{(1-\alpha)/\alpha}
\]
gives
\[
    F(x_{k+1})-F(x_k)
    \leq
    -\eta_k\normg{g_k}^{2}
    +
    \frac{c_\alpha}{1+\alpha}
    \eta_k^{1+\alpha}\normg{g_k}^{1+\alpha}.
\]
Since
\[
    \frac{c_\alpha}{1+\alpha}a_\beta^\alpha=1-\beta,
\]
we have
\[
    F(x_{k+1})
    \leq
    F(x_k)-\beta a_\beta\normg{g_k}^{(1+\alpha)/\alpha}.
\]
Summing from \(k=0\) to \(K-1\) gives
\[
    \beta a_\beta
    \sum_{k=0}^{K-1}
    \normg{g_k}^{(1+\alpha)/\alpha}
    \leq
    F(x_0)-F_\star.
\]
Taking the minimum and raising to the power \(\alpha/(1+\alpha)\) proves the
claim.
\subsection{Polyak–Łojasiewicz inequality and quadratic growth onsequences}
The global stationarity result shows that the adaptive method drives the
gradient norm to zero under one-sided H\"older regularity. To obtain rates for
the objective gap or for the distance to the solution set, additional local geometric
structure is necessary. We therefore record two standard consequences under the
Polyak--{\L}ojasiewicz inequality and quadratic growth.
\begin{corollary}[PL and quadratic-growth consequences]
\label{cor:pl-qg-rates}
Assume the hypotheses of Theorem~\ref{thm:explicit-adaptive-rate}. If \(F\)
satisfies the Polyak--{\L}ojasiewicz inequality
\[
    \normg{\nabla F(x)}^2
    \geq
    2\mu(F(x)-F_\star),
\]
then, for \(0<\alpha<1\),
\[
    F(x_k)-F_\star
    =
    O\left(k^{-2\alpha/(1-\alpha)}\right).
\]
If \(\alpha=1\), the objective gap decreases linearly:
\[
    F(x_k)-F_\star
    \leq
    (1-2\mu\beta a_\beta)^k(F(x_0)-F_\star),
\]
provided \(0<2\mu\beta a_\beta<1\). If, in addition, \(F\) satisfies quadratic
growth,
\[
    F(x)-F_\star
    \geq
    \frac{m}{2}\operatorname{dist}(x,X_\star)^2,
    \qquad
    X_\star:=\arg\min F,
\]
then, for \(0<\alpha<1\),
\[
    \operatorname{dist}(x_k,X_\star)
    =
    O\left(k^{-\alpha/(1-\alpha)}\right).
\]
\end{corollary}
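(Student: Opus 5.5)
The plan starts from the per-step decrease inequality established in the proof of Theorem~\ref{thm:explicit-adaptive-rate}:
\[
    F(x_{k+1})\le F(x_k)-\beta a_\beta\normg{g_k}^{(1+\alpha)/\alpha}.
\]
Write \(\delta_k:=F(x_k)-F_\star\ge0\). The PL inequality gives \(\normg{g_k}\ge(2\mu\delta_k)^{1/2}\), so \(\normg{g_k}^{(1+\alpha)/\alpha}\ge(2\mu\delta_k)^{(1+\alpha)/(2\alpha)}\). This yields the scalar recursion
\[
    \delta_{k+1}\le\delta_k-C\delta_k^{p},\qquad C:=\beta a_\beta(2\mu)^{p},\quad p:=\frac{1+\alpha}{2\alpha}.
\]

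\emph{Case \(\alpha=1\).} Here \(p=1\) and \(C=2\mu\beta a_\beta\). The recursion becomes \(\delta_{k+1}\le(1-2\mu\beta a_\beta)\delta_k\). Iterating it gives the linear rate; the hypothesis \(0<2\mu\beta a_\beta<1\) ensures the contraction factor lies in \((0,1)\).

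\emph{Case \(0<\alpha<1\).} Now \(p>1\), and I will use the standard argument for sequences with \(\delta_{k+1}\le\delta_k-C\delta_k^{p}\). If some \(\delta_k=0\), the sequence stays at zero and there is nothing to prove, so assume \(\delta_k>0\). Since \(\delta\) is nonincreasing, convexity of \(t\mapsto t^{1-p}\) (or the mean value theorem applied to \(t^{-(p-1)}\)) gives
\[
    \delta_{k+1}^{-(p-1)}-\delta_k^{-(p-1)}\ge (p-1)\,\delta_k^{-p}(\delta_k-\delta_{k+1})\ge (p-1)C.
\]
To justify this, apply the mean value theorem at some \(\xi\in[\delta_{k+1},\delta_k]\): the derivative factor \(\xi^{-p}\) is at least \(\delta_k^{-p}\). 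Summing over \(k\) gives \(\delta_k^{-(p-1)}\ge(p-1)Ck\). With \(p-1=(1-\alpha)/(2\alpha)\), this means \(\delta_k\le((p-1)Ck)^{-2\alpha/(1-\alpha)}\), which is \(O(k^{-2\alpha/(1-\alpha)})\).

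\emph{Quadratic growth.} This step is immediate: \(\operatorname{dist}(x_k,X_\star)\le(2\delta_k/m)^{1/2}\). Taking the square root of the previous rate gives \(O(k^{-\alpha/(1-\alpha)})\).

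\emph{Main obstacle.} The only delicate point is the telescoping inequality for the reciprocal powers. Monotonicity of \(\delta_k\) must be used so that \(\xi^{-p}\ge\delta_k^{-p}\). Also, \(\delta_{k+1}\) could in principle be driven to zero or below by the bound, but this is harmless because \(\delta_{k+1}\ge0\) always holds. Everything else is direct substitution.
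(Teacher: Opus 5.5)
Your proof is correct and follows the paper's route. You combine the per-step decrease from Theorem~\ref{thm:explicit-adaptive-rate} with PL to get $\delta_{k+1}\le\delta_k-C\delta_k^{(1+\alpha)/(2\alpha)}$, read off the linear contraction when $\alpha=1$, and pass to distances via quadratic growth. The paper only asserts that this recursion implies $O(k^{-2\alpha/(1-\alpha)})$, whereas you prove it by the convexity/mean-value bound on $\delta_k^{-(p-1)}$ and telescoping.
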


\begin{proof}

Let \(\Delta_k=F(x_k)-F_\star\). The adaptive descent inequality gives
\[
    \Delta_{k+1}
    \leq
    \Delta_k
    -
    \beta a_\beta\normg{\nabla F(x_k)}^{(1+\alpha)/\alpha}.
\]
Under the PL inequality,
\[
    \normg{\nabla F(x_k)}^{(1+\alpha)/\alpha}
    \geq
    (2\mu)^{(1+\alpha)/(2\alpha)}
    \Delta_k^{(1+\alpha)/(2\alpha)}.
\]
For \(0<\alpha<1\), this yields
\[
    \Delta_{k+1}
    \leq
    \Delta_k-A\Delta_k^\theta,
    \qquad
    \theta=\frac{1+\alpha}{2\alpha}>1,
\]
which implies
\[
    \Delta_k=O(k^{-1/(\theta-1)})
    =
    O(k^{-2\alpha/(1-\alpha)}).
\]
For \(\alpha=1\), \(\theta=1\), so the recursion is linear:
\[
    \Delta_{k+1}
    \leq
    (1-2\mu\beta a_\beta)\Delta_k.
\]
Quadratic growth converts objective-gap convergence to distance convergence.
\end{proof}
\subsection{Proof of Theorem~\ref{thm:armijo-global}}

We write
\[
g_n:=\nabla F(x_n),
\qquad
F_n:=F(x_n).
\]
If \(g_n=0\) for some \(n\), then \(x_n\) is already a critical point. In that
case, we may set
\[
x_m=x_n
\qquad\text{for all }m\ge n,
\]
and the conclusion is immediate. Hence, in the rest of the proof, we assume
that
\[
g_n\neq0
\qquad\text{for all }n\ge0.
\]

\medskip

\noindent
\textbf{Step 1: Descent estimate from one-sided H\"older regularity.}

Fix \(x\in\mathbb{R}^d\) and write
\[
g:=\nabla F(x).
\]
For \(\gamma>0\), define
\[
\phi(t):=F(x-tg),
\qquad t\in[0,\gamma].
\]
Since \(F\in C^1(\mathbb{R}^d,\mathbb{R})\), the fundamental theorem of calculus
gives
\[
F(x-\gamma g)-F(x)
=
-\int_0^\gamma
\langle \nabla F(x-tg),g\rangle\,dt.
\]
Adding and subtracting \(\nabla F(x)\), we obtain
\[
F(x-\gamma g)-F(x)
=
-\gamma\|g\|^2
+
\int_0^\gamma
\langle \nabla F(x)-\nabla F(x-tg),g\rangle\,dt.
\]
Now apply the one-sided H\"older condition with
\[
y=x-tg.
\]
Then
\[
x-y=tg.
\]
Therefore,
\[
\langle \nabla F(x)-\nabla F(x-tg),tg\rangle
\le
c_\alpha\|tg\|^{1+\alpha}.
\]
For \(t>0\), dividing by \(t\) gives
\[
\langle \nabla F(x)-\nabla F(x-tg),g\rangle
\le
c_\alpha t^\alpha \|g\|^{1+\alpha}.
\]
Hence
\[
F(x-\gamma g)-F(x)
\le
-\gamma\|g\|^2
+
c_\alpha\|g\|^{1+\alpha}
\int_0^\gamma t^\alpha\,dt.
\]
Since
\[
\int_0^\gamma t^\alpha\,dt
=
\frac{\gamma^{1+\alpha}}{1+\alpha},
\]
we obtain
\[
F(x-\gamma g)
\le
F(x)
-
\gamma\|g\|^2
+
\frac{c_\alpha}{1+\alpha}\gamma^{1+\alpha}\|g\|^{1+\alpha}.
\tag{1}
\label{eq:basic-armijo-descent}
\]

\medskip

\noindent
\textbf{Step 2: The Armijo line search terminates.}

We show that the Armijo condition holds for all sufficiently small
\(\gamma>0\). By \eqref{eq:basic-armijo-descent}, it is enough to require
\[
-\gamma\|g\|^2
+
\frac{c_\alpha}{1+\alpha}\gamma^{1+\alpha}\|g\|^{1+\alpha}
\le
-\sigma\gamma\|g\|^2.
\]
Equivalently,
\[
\frac{c_\alpha}{1+\alpha}\gamma^{1+\alpha}\|g\|^{1+\alpha}
\le
(1-\sigma)\gamma\|g\|^2.
\]
Since \(g\neq0\) and \(\gamma>0\), this is equivalent to
\[
\frac{c_\alpha}{1+\alpha}\gamma^\alpha \|g\|^{\alpha-1}
\le
1-\sigma.
\]
Define
\[
\eta
:=
\left(
\frac{(1-\sigma)(1+\alpha)}{c_\alpha}
\right)^{1/\alpha}.
\]
Then the above inequality holds whenever
\[
\gamma
\le
\eta\|g\|^{\frac{1-\alpha}{\alpha}}.
\tag{2}
\label{eq:safe-armijo-threshold}
\]
Since the trial step sizes
\[
\bar\gamma,\rho\bar\gamma,\rho^2\bar\gamma,\ldots
\]
converge to \(0\), eventually one of them satisfies
\eqref{eq:safe-armijo-threshold}. Therefore the Armijo line search terminates
at every nonstationary iterate.

\medskip

\noindent
\textbf{Step 3: Lower bound for the accepted step size.}

For each \(n\), define
\[
\delta_n
:=
\eta\|g_n\|^{\frac{1-\alpha}{\alpha}}.
\]
By Step 2, every trial step \(\gamma\le\delta_n\) satisfies the Armijo
condition. Since \(\gamma_n\) is chosen as the largest accepted point in the
geometric grid, we claim that
\[
\gamma_n
\ge
\rho\min\{\bar\gamma,\delta_n\}.
\tag{3}
\label{eq:armijo-step-lower-bound}
\]

Indeed, if \(\bar\gamma\le\delta_n\), then the first trial step
\(\bar\gamma\) is accepted, and therefore
\[
\gamma_n=\bar\gamma
\ge
\rho\bar\gamma
=
\rho\min\{\bar\gamma,\delta_n\}.
\]
If \(\bar\gamma>\delta_n\), choose the smallest integer \(m\ge1\) such that
\[
\rho^m\bar\gamma\le\delta_n.
\]
Then
\[
\delta_n<\rho^{m-1}\bar\gamma.
\]
The trial step \(\rho^m\bar\gamma\) is accepted, so
\[
\gamma_n\ge \rho^m\bar\gamma.
\]
Also,
\[
\rho\delta_n
<
\rho^m\bar\gamma.
\]
Thus
\[
\gamma_n
\ge
\rho^m\bar\gamma
>
\rho\delta_n
=
\rho\min\{\bar\gamma,\delta_n\}.
\]
Hence \eqref{eq:armijo-step-lower-bound} holds in all cases.

\medskip

\noindent
\textbf{Step 4: Monotonicity of the objective.}

By the Armijo condition,
\[
F_{n+1}
=
F(x_n-\gamma_n g_n)
\le
F_n-\sigma\gamma_n\|g_n\|^2.
\]
Therefore
\[
F_{n+1}\le F_n.
\]
Hence \((F_n)\) is monotonically nonincreasing. Since \(F\) is bounded from
below, \((F_n)\) converges to some finite limit.

\medskip

\noindent
\textbf{Step 5: Boundedness of the iterates.}

Since \(F_n\le F_0\) for every \(n\), all iterates belong to the sublevel set
\[
S_0
:=
\{x\in\mathbb{R}^d:F(x)\le F(x_0)\}.
\]
Because \(F\) is coercive, \(S_0\) is bounded. Since \(F\) is continuous,
\(S_0\) is closed. Therefore \(S_0\) is compact.

Thus \((x_n)\) is bounded.

\medskip

\noindent
\textbf{Step 6: A summability estimate.}

Combining Armijo decrease with the lower bound
\eqref{eq:armijo-step-lower-bound}, we obtain
\[
F_{n+1}
\le
F_n-\sigma\gamma_n\|g_n\|^2
\le
F_n
-
\sigma\rho
\min\{\bar\gamma,\delta_n\}\|g_n\|^2.
\]
Using the definition of \(\delta_n\), we have
\[
\min\{\bar\gamma,\delta_n\}\|g_n\|^2
=
\min\left\{
\bar\gamma\|g_n\|^2,
\eta\|g_n\|^{\frac{1-\alpha}{\alpha}}\|g_n\|^2
\right\}.
\]
Therefore,
\[
\min\{\bar\gamma,\delta_n\}\|g_n\|^2
=
\min\left\{
\bar\gamma\|g_n\|^2,
\eta\|g_n\|^{\frac{1+\alpha}{\alpha}}
\right\}.
\]
Hence
\[
F_{n+1}
\le
F_n
-
\sigma\rho
\min\left\{
\bar\gamma\|g_n\|^2,
\eta\|g_n\|^{\frac{1+\alpha}{\alpha}}
\right\}.
\tag{4}
\label{eq:armijo-decrease-min}
\]

Summing \eqref{eq:armijo-decrease-min} from \(n=0\) to \(N\), we get
\[
\sigma\rho
\sum_{n=0}^{N}
\min\left\{
\bar\gamma\|g_n\|^2,
\eta\|g_n\|^{\frac{1+\alpha}{\alpha}}
\right\}
\le
F_0-F_{N+1}.
\]
Since \(F\) is bounded from below,
\[
F_{N+1}\ge \inf_{\mathbb{R}^d}F.
\]
Therefore,
\[
\sigma\rho
\sum_{n=0}^{N}
\min\left\{
\bar\gamma\|g_n\|^2,
\eta\|g_n\|^{\frac{1+\alpha}{\alpha}}
\right\}
\le
F_0-\inf_{\mathbb{R}^d}F.
\]
Letting \(N\to\infty\), we conclude that
\[
\sum_{n=0}^{\infty}
\min\left\{
\bar\gamma\|g_n\|^2,
\eta\|g_n\|^{\frac{1+\alpha}{\alpha}}
\right\}
<\infty.
\tag{5}
\label{eq:armijo-summability}
\]

\medskip

\noindent
\textbf{Step 7: The gradients vanish.}

We prove that
\[
\|g_n\|\to0.
\]
Suppose, for contradiction, that \(\|g_n\|\not\to0\). Then there exist
\(\varepsilon>0\) and a subsequence \((g_{n_j})\) such that
\[
\|g_{n_j}\|\ge\varepsilon
\qquad\text{for all }j.
\]
For every \(j\),
\[
\min\left\{
\bar\gamma\|g_{n_j}\|^2,
\eta\|g_{n_j}\|^{\frac{1+\alpha}{\alpha}}
\right\}
\ge
\min\left\{
\bar\gamma\varepsilon^2,
\eta\varepsilon^{\frac{1+\alpha}{\alpha}}
\right\}.
\]
The right-hand side is a strictly positive constant. This contradicts the
summability in \eqref{eq:armijo-summability}. Therefore
\[
\|g_n\|\to0.
\]
Equivalently,
\[
\|\nabla F(x_n)\|\to0.
\tag{6}
\label{eq:gradient-vanishes-armijo}
\]

\medskip

\noindent
\textbf{Step 8: Successive increments vanish.}

Since every accepted step size belongs to the trial grid,
\[
0<\gamma_n\le\bar\gamma.
\]
Therefore,
\[
\|x_{n+1}-x_n\|
=
\gamma_n\|g_n\|
\le
\bar\gamma\|g_n\|.
\]
Using \eqref{eq:gradient-vanishes-armijo}, we obtain
\[
\|x_{n+1}-x_n\|\to0.
\tag{7}
\label{eq:increments-vanish-armijo}
\]

\medskip

\noindent
\textbf{Step 9: The cluster set is nonempty, compact, and connected.}

Define the cluster set
\[
\omega(x_n)
:=
\left\{
\bar x\in\mathbb{R}^d:
\exists\, n_j\to\infty
\text{ such that }
x_{n_j}\to \bar x
\right\}.
\]
Since \((x_n)\) is bounded, \(\omega(x_n)\) is nonempty and compact.

Moreover, by \eqref{eq:increments-vanish-armijo},
\[
\|x_{n+1}-x_n\|\to0.
\]
A standard connectedness lemma for cluster sets states that a bounded sequence
with vanishing successive increments has a connected cluster set. Therefore
\[
\omega(x_n)
\]
is connected.

\medskip

\noindent
\textbf{Step 10: Every cluster point is critical.}

Let \(\bar x\in\omega(x_n)\). Then there exists a subsequence
\((x_{n_j})\) such that
\[
x_{n_j}\to \bar x.
\]
By \eqref{eq:gradient-vanishes-armijo},
\[
\nabla F(x_{n_j})\to0.
\]
Since \(F\in C^1(\mathbb{R}^d,\mathbb{R})\), the gradient \(\nabla F\) is
continuous. Hence
\[
\nabla F(\bar x)
=
\lim_{j\to\infty}\nabla F(x_{n_j})
=
0.
\]
Thus every cluster point is critical:
\[
\omega(x_n)
\subseteq
\operatorname{Crit}(F),
\]
where
\[
\operatorname{Crit}(F)
:=
\{x\in\mathbb{R}^d:\nabla F(x)=0\}.
\]

\medskip

\noindent
\textbf{Step 11: Isolation of critical points gives a single cluster point.}

Since all iterates lie in \(S_0\), we have
\[
\omega(x_n)
\subseteq
\operatorname{Crit}(F)\cap S_0.
\]
The set \(S_0\) is compact. Because \(\nabla F\) is continuous,
\(\operatorname{Crit}(F)\) is closed. Hence
\[
\operatorname{Crit}(F)\cap S_0
\]
is compact.

By assumption, every critical point of \(F\) is isolated. Therefore
\(\operatorname{Crit}(F)\cap S_0\) is a compact set consisting only of isolated
points, and hence it is finite.

Since \(\omega(x_n)\) is connected and contained in a finite set, it must be a
singleton. Therefore there exists a critical point \(x_\infty\) such that
\[
\omega(x_n)=\{x_\infty\}.
\]

\medskip

\noindent
\textbf{Step 12: The whole sequence converges.}

We now show that
\[
x_n\to x_\infty.
\]
Suppose not. Then there exist \(\varepsilon>0\) and a subsequence
\((x_{n_j})\) such that
\[
\|x_{n_j}-x_\infty\|\ge\varepsilon
\qquad\text{for all }j.
\]
Since \((x_n)\) is bounded, the subsequence \((x_{n_j})\) has a further
convergent subsequence, say
\[
x_{n_{j_\ell}}\to \widetilde x.
\]
Then \(\widetilde x\in\omega(x_n)\). Since
\[
\omega(x_n)=\{x_\infty\},
\]
we get
\[
\widetilde x=x_\infty.
\]
However, passing to the limit in
\[
\|x_{n_{j_\ell}}-x_\infty\|\ge\varepsilon
\]
gives
\[
\|\widetilde x-x_\infty\|\ge\varepsilon,
\]
which contradicts \(\widetilde x=x_\infty\). Therefore
\[
x_n\to x_\infty.
\]
Since \(x_\infty\in\operatorname{Crit}(F)\), we have
\[
\nabla F(x_\infty)=0.
\]

If \(F\) has a unique critical point \(x_\star\), then necessarily
\[
x_\infty=x_\star.
\]
Thus
\[
x_n\to x_\star.
\]
This completes the proof.

\subsection{Proof of Proposition~\ref{prop:backtracking-rate}}

By Lemma~\ref{lem:osh-descent}, Armijo decrease is guaranteed whenever
\[
    \frac{c_\alpha}{1+\alpha}
    \eta^\alpha\normg{g_k}^{\alpha-1}
    \leq
    1-\beta.
\]
Equivalently,
\[
    \eta
    \leq
    a_\beta\normg{g_k}^{(1-\alpha)/\alpha}.
\]
Therefore backtracking terminates. Since the line search uses a geometric grid,
the accepted step satisfies
\[
    \eta_k
    \geq
    \rho
    \min\{
        \bar\eta,
        a_\beta\normg{g_k}^{(1-\alpha)/\alpha}
    \}.
\]
Combining this lower bound with Armijo decrease gives
\[
    F(x_{k+1})
    \leq
    F(x_k)
    -
    \beta\rho
    \min\{
        \bar\eta\normg{g_k}^2,
        a_\beta\normg{g_k}^{(1+\alpha)/\alpha}
    \}.
\]
Summing over \(k=0,\ldots,K-1\) and using \(F(x_K)\ge F_\star\) gives
\[
    \sum_{k=0}^{K-1}
    \min\{
        \bar\eta\normg{g_k}^2,
        a_\beta\normg{g_k}^{(1+\alpha)/\alpha}
    \}
    \leq
    \frac{F(x_0)-F_\star}{\beta\rho}.
\]
Let \(m_K=\min_{0\le k<K}\normg{g_k}\). Then
\[
    K
    \min\{
        \bar\eta m_K^2,
        a_\beta m_K^{(1+\alpha)/\alpha}
    \}
    \leq
    \frac{F(x_0)-F_\star}{\beta\rho}.
\]
The stated bound follows.

\subsection{Proof of Proposition~\ref{prop:estimator-to-rate}}

The proposal satisfies
\[
\bar\eta_k
\geq
s
\left(
\frac{1+\alpha}{C_{\max}}
\right)^{1/\alpha}
\normg{g_k}^{(1-\alpha)/\alpha}.
\]
Backtracking accepts a step no smaller than \(\rho\) times the minimum of the
initial proposal and the largest guaranteed Armijo-safe H\"older step
\(a_\beta\normg{g_k}^{(1-\alpha)/\alpha}\). This gives the claimed lower bound.

\subsection{Proof of Theorem~\ref{thm:osh-rate}}

By the assumed sufficient-decrease inequality,
\[
    F(x_{k+1})
    \leq
    F(x_k)-\beta\eta_k\normg{g_k}^{2}.
\]
Using the lower step-size bound in \eqref{eq:adaptive-step-lower},
\[
    \eta_k
    \geq
    a\normg{g_k}^{(1-\alpha)/\alpha},
\]
we obtain
\[
    F(x_{k+1})
    \leq
    F(x_k)
    -
    \beta a\normg{g_k}^{(1+\alpha)/\alpha},
\]
because
\[
    2+\frac{1-\alpha}{\alpha}
    =
    \frac{1+\alpha}{\alpha}.
\]
Summing from \(k=0\) to \(K-1\) yields
\[
    \beta a
    \sum_{k=0}^{K-1}
    \normg{g_k}^{(1+\alpha)/\alpha}
    \leq
    F(x_0)-F(x_K)
    \leq
    F(x_0)-F_\star.
\]
Let
\[
    m_K:=\min_{0\leq k<K}\normg{g_k}.
\]
Then
\[
    K m_K^{(1+\alpha)/\alpha}
    \leq
    \sum_{k=0}^{K-1}
    \normg{g_k}^{(1+\alpha)/\alpha}.
\]
Consequently,
\[
    \beta a K m_K^{(1+\alpha)/\alpha}
    \leq
    F(x_0)-F_\star.
\]
Raising both sides to the power \(\alpha/(1+\alpha)\) gives
\[
    \min_{0\leq k<K}\normg{g_k}
    \leq
    \left(
        \frac{F(x_0)-F_\star}{\beta a K}
    \right)^{\alpha/(1+\alpha)}.
\]
This proves the result.

\section{Why Adaptive Steps Instead of Diminishing Steps?}
\label{sec:adaptive-vs-diminishing}
% ============================================================
Assume that \(F\in C^1(\mathbb{R}^d,\mathbb{R})\) is bounded from below, the
iterates remain in a bounded convex set, and the one-sided H\"older condition
holds on this set. Let
\[
\eta_k=\eta_0(k+1)^{-p}.
\]
If \(\alpha=1\), assume \(0<p\le1\). If \(0<\alpha<1\), the one-sided H\"older diminishing-step theory requires 
\[
\frac{1-\alpha}{1+\alpha}<p\le1.
\]
Then
\[
\|\nabla F(x_k)\|\to0.
\]
Moreover, for \(0<p<1\), the corresponding best-iterate stationarity rate has the
form
\[
\min_{0\le k<K}\|\nabla F(x_k)\|
=
O\left(K^{-(1-p)/2}\right),
\]
and for \(p=1\),
\[
\min_{0\le k<K}\|\nabla F(x_k)\|
=
O\left((\log K)^{-1/2}\right).
\]
The best exponent is approached only as
\[
    p\downarrow \frac{1-\alpha}{1+\alpha},
\]
which formally yields the limiting exponent \(\alpha/(1+\alpha)\).

The adaptive method instead balances
\[
    -\eta_k\normg{g_k}^{2}
\]
against the one-sided H\"older remainder
\[
    \eta_k^{1+\alpha}\normg{g_k}^{1+\alpha}.
\]
This gives
\[
    \eta_k
    \asymp
    \normg{g_k}^{(1-\alpha)/\alpha}.
\]
Thus the step size shrinks because the gradient norm indicates proximity to
stationarity, not merely because the iteration counter is large. This is both a
theoretical and practical advantage: the method uses the local descent geometry
rather than a pre-selected decay exponent.

% ============================================================
\section{Experimental Details}
\label{app:experimental-details}
% ============================================================

The experiments are full-batch deterministic optimization runs on controlled
objectives. No stochastic mini-batches, train/test splits, or validation sets
are used, because both benchmarks are objective-based rather than
dataset-based. Thus, the reported quantities are objective-side and
stationarity-side metrics computed directly along the optimization trajectory.
The reported variability comes from \(20\) random seeds, which determine the
initialization and the controlled problem instance when applicable.

For the controlled binary classification benchmark, the objective is
\(F_{\rm cls}\) from Section~\ref{subsec:controlled-classification}. We use
\(\alpha=0.7\), \(d=20\), \(\lambda=0.5\), \(\gamma=30\), and run each method
for \(30\) iterations. Since all methods reach perfect classification accuracy
on this controlled task, accuracy is not used as a primary metric. We instead
report final binary cross-entropy, objective value, gradient norm, and
classification margin.

For the controlled nonconvex H\"older regression benchmark, the objective is
\(F_{\rm reg}\) from Section~\ref{subsec:controlled-regression}. We use
\(\alpha=1/2\), \(d=20\), \(\gamma=10\), initialization \(u_0\approx1\),
\(v_{0,j}\approx0.1\), and run each method for \(50\) iterations. The global
minimum value is \(F_\star=0\), so the final objective gap is computed as
\(F(x_K)-F_\star\).

The compared optimizers are fixed-step GD, diminishing-step GD, GL, OSL, GH,
and OSH. All methods use full gradients. The curvature-based methods use the
same sufficient-decrease backtracking criterion and differ only in the scalar
curvature estimate used to propose the initial step size. Hyperparameters are
chosen from the grids implemented in the supplementary code, using the same
selection protocol for all methods within each benchmark.

\paragraph{Compute resources.}
All experiments are deterministic full-batch experiments on low-dimensional
objectives and were run on a single CPU worker. No GPU, TPU, or distributed
compute was used. The controlled binary classification benchmark uses
\(20\) random seeds and \(30\) iterations per method, while the controlled
nonconvex H\"older regression benchmark uses \(20\) random seeds and \(50\)
iterations per method. On a standard laptop or workstation CPU, each full
benchmark sweep runs within a few minutes and requires less than 1~GB of RAM.
The plotting and table-generation scripts run in under one minute.

\end{document}